\pdfoutput=1
\documentclass[11pt]{article}

\usepackage[preprint]{acl}

\usepackage{times}
\usepackage{latexsym}

\usepackage[T1]{fontenc}
\usepackage[utf8]{inputenc}
\usepackage{microtype}
\usepackage{inconsolata}
\usepackage{graphicx}

\usepackage{amsmath}
\usepackage{amssymb}
\usepackage{amsthm}
\usepackage{booktabs}
\usepackage{tabularx}
\usepackage{makecell}
\usepackage{colortbl}
\usepackage{enumitem}
\usepackage{placeins}
\usepackage{csquotes}
\usepackage{pifont}
\usepackage{listings}

\lstdefinestyle{papercode}{
  basicstyle=\scriptsize\ttfamily,
  breaklines=true,
  breakatwhitespace=false,
  columns=fullflexible,
  keepspaces=true,
  showstringspaces=false,
  tabsize=2,
  frame=single,
  xleftmargin=0pt,
  xrightmargin=0pt,
  aboveskip=3pt,
  belowskip=3pt
}
\newtheorem{theorem}{Theorem}
\newtheorem{lemma}{Lemma}
\newtheorem{corollary}{Corollary}
\newcommand{\cmark}{\ding{51}}
\newcommand{\xmark}{\ding{55}}
\newcolumntype{Y}{>{\raggedright\arraybackslash}X}

\title{AMDKernelVault: Large-Scale Datasets and Agentic Training for \\ AMD GPU Kernel Optimization}

\author{
  Ji Liu$^{*,\dagger}$ \quad
  Saptarshi Majumder$^{*,\dagger}$ \quad
  Yiqing Huang$^{*,\dagger}$ \quad
  Wenwen Ouyang$^{*,\dagger}$ \\
  Umang Pandey \quad
  Zeping Li \quad
  Chushi Chen \quad
  Zihao An \\
  Puyuan Yang \quad
  Zekai Li \quad
  Sina Rafati \quad
  Ziqiong Liu \\
  Pratik Prabhanjan Brahma \quad
  Dong Li \quad
  Zicheng Liu \\
  Sharon Zhou \quad
  Emad Barsoum \\
  Advanced Micro Devices, Inc.\\
  \textsuperscript{*}\textit{Equal contribution.}\quad
  $^{\dagger}$\textit{Correspondence:} \texttt{\{liuji,sapmajum,Cody.Huang,viouyang\}@amd.com}.
}

\begin{document}
\maketitle
\begin{abstract}
We introduce AMDKernelVault, an open HIP and Triton kernel corpus and training framework for recent AMD CDNA GPUs. Existing LLM-based kernel agents are largely CUDA/NVIDIA-centric and often depend on repeated frontier-LLM calls for generation, reflection, and optimization. To address this gap, we develop HIPKernelGen and TritonKernelGen, agent-driven pipelines that transform PyTorch references into HIP or Triton kernels, compile and validate candidates under ROCm, and latency-profile them on AMD hardware. The corpus contains 62,153 execution-verified HIP kernel samples, 2,377 production-grounded ROCm Libraries QA entries, and 39,893 Triton kernels. We further train Qwen3-8B with supervised fine-tuning and execution-aware reinforcement learning as a demonstration of the corpus's utility. Under fixed evaluation budgets, it achieves the highest correctness among the compared models on PyTorch-to-HIP (34.0\% Pass@1), TritonBench-G (33.2\% Corr@3), and ROCmBench (41.94\% Corr@3), but does not uniformly lead compilation or speed metrics. The corpus and documentation are available at \url{https://huggingface.co/datasets/amd/AIG-Datasets}, and the associated training and kernel-generation code is available at \url{https://github.com/AMD-AGI/hip_kernel_llm_lab}.
\end{abstract}

\section{Introduction}
\label{sec:intro}

Recent advances in GPU kernel optimization have been driven by LLM-based~\citep{chen2025cudallm,baronio2026kevin} and agent-based approaches~\citep{wang2025geak,dai2026cuda}, together with standardized benchmarks such as KernelBench~\citep{pmlr-v267-ouyang25a,lange2025robustkbench,heakl2026cass}. These agent-based systems leverage compiler feedback, runtime correctness checks, and performance profiling to iteratively refine generated kernels~\citep{wang2025geak,chen2025cudallm}. However, the data and tooling behind these systems remain largely CUDA/NVIDIA-centric. In contrast, AMD's HIP/ROCm~\citep{amdrocmdoc,amdhipdoc} ecosystem lacks large-scale, execution-verified kernel data for training and evaluating kernel LLMs, especially across both native HIP and Triton backends.

This data gap is not a matter of syntax conversion alone. Although CUDA and HIP expose similar programming abstractions, performant AMD kernels depend on ROCm compilation behavior, wavefront execution, LDS usage, memory coalescing, matrix-core support, and Triton~\citep{tillet2019triton}  backend constraints. Direct translation through \texttt{hipify}~\citep{amdhipifydoc} may preserve API structure while retaining NVIDIA-shaped tiling constants, warp-size assumptions, input-specialized indexing, or memory layouts that fail to compile or underperform on AMD GPUs. Reliable AMD kernel data therefore requires native compilation, execution-based correctness validation, and latency measurement under ROCm.

We present \textbf{AMDKernelVault}, an open, execution-verified HIP and Triton~\citep{tillet2019triton} kernel corpus together with scalable agent-based pipelines for AMD GPU kernel data production. Our pipelines, HIPKernelGen and TritonKernelGen, start from PyTorch references and use a generate--evaluate--reflect loop to synthesize candidate kernels, compile them under ROCm, validate numerical correctness across test inputs, and profile latency on AMD hardware. This process produces 64K+ validated HIP/ROCm data and $\sim$40K Triton kernels for recent AMD CDNA GPUs, including benchmark-derived tasks, synthesized operator workloads, and production-grounded supervision from \texttt{rocBLAS} and \texttt{rocSOLVER}.

Beyond releasing data, we evaluate whether this corpus can train compact AMD kernel LLMs that reduce reliance on frontier LLMs in agentic optimization systems. Existing kernel-agent workflows often call frontier models repeatedly for generation, reflection, and optimization, which increases cost and limits deployment in private AMD environments. Using AMDKernelVault, we train Qwen3-8B~\citep{yang2025qwen3} with supervised fine-tuning and execution-aware reinforcement learning, then insert the trained model into a GEAK-style~\citep{wang2025geak} agent loop where one local policy serves as generator, reflector, and optimizer. The training recipe is intended as an effectiveness test for the corpus and pipeline rather than a new RL algorithm.

Under the same evaluation budget, the trained 8B model achieves the highest correctness among the compared models on PyTorch-to-HIP (34.0\% Pass@1) and TritonBench-G (33.2\% Corr@3), while running locally on AMD hardware. These results show that execution-verified AMD kernel data can support locally deployable kernel LLMs for ROCm environments. Our primary contribution is the resource and its AMD-native generation and validation infrastructure; the SFT+RL recipe is an effectiveness demonstration rather than a new learning algorithm. Comparisons with frontier models are specific to the reported correctness metrics, benchmarks, and agent budgets.

\paragraph{Our contributions:}
\begin{itemize}[leftmargin=*,itemsep=2pt]
\item \textbf{AMD-native kernel corpus.} We release 64K+ HIP/ROCm and $\sim$40K Triton data validated through ROCm compilation, execution-based correctness tests, latency profiling, or production-grounded HIP code supervision on recent AMD CDNA GPUs.

\item \textbf{Scalable data production pipelines.} HIPKernelGen and TritonKernelGen provide reusable agent-based generate--evaluate--reflect workflows for producing verified HIP and Triton kernel data from PyTorch references, including benchmark-derived, synthesized, and production-grounded sources.

\item \textbf{Compact AMD kernel LLMs.} As a corpus-utility demonstration, we train a local Qwen3-8B model to serve the generator, reflector, and optimizer roles in a GEAK-style loop, improving specific AMD HIP/Triton correctness metrics under the same evaluation budget.
\end{itemize}

\section{Related Work}
\label{sec:related}

\paragraph{GPU Kernel Optimization.}
GPU kernel optimization is a longstanding problem in high-performance ML systems~\citep{ragan2013halide,chen2018tvm,vasilache2018tensorcomprehensions}. Significant efforts have focused on operator fusion, tiling, memory hierarchy optimization, and custom kernel design. Representative systems such as FlashAttention~\citep{dao2022flashattention,shah2024flashattention} show that carefully designed kernels can provide substantial speedups for modern ML workloads. More recently, automated frameworks have used program analysis, search, and learning-based techniques~\citep{zheng2020ansor} to reduce the manual effort required for kernel tuning. However, most of these methods are developed and evaluated primarily on Nvidia GPUs and CUDA backends, leaving AMD-focused kernel optimization comparatively underexplored.

\begin{figure*}[t]
\centering
\includegraphics[width=0.90\textwidth]{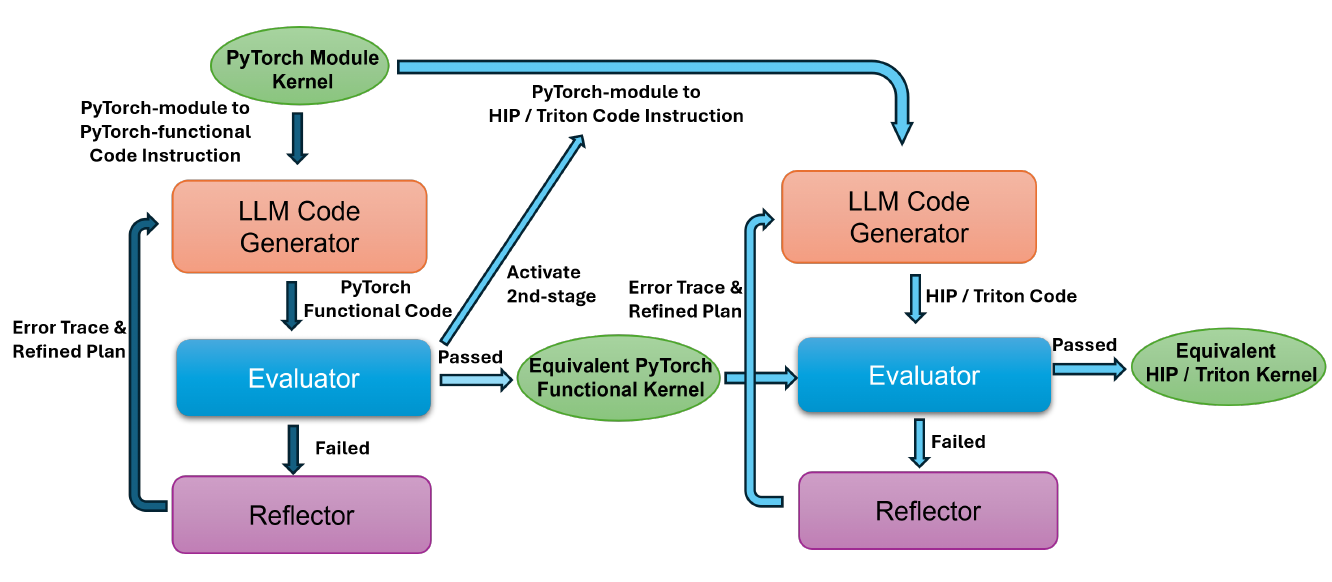}
\caption{\textbf{Unified kernel generation framework.} Stage 1 standardizes PyTorch modules into function-style references. Stage 2 synthesizes HIP and Triton kernels using a GEAK-based agentic loop with compilation, correctness validation, and latency-guided optimization.}
\label{fig:overview}
\end{figure*}

\paragraph{HIP, Triton, and ROCm.}
AMD's HIP provides a CUDA-like programming model within the ROCm software stack~\citep{amdrocmdoc,amdhipdoc}, while Triton offers a Python-based DSL for writing custom GPU kernels with explicit control over memory and parallelism~\citep{tillet2019triton}. Although these abstractions improve portability, AMD execution differs in ROCm compiler behavior, wavefront-level execution, LDS usage, memory coalescing, and backend scheduling constraints. Source-to-source tools such as \texttt{hipify} can preserve syntax but provide limited guarantees of correctness or performance portability~\citep{amdhipifydoc}. This motivates native HIP/ROCm validation rather than treating AMD support as a post-hoc translation step.

\paragraph{GPU Kernel Datasets and Benchmarks.}
High-quality datasets are essential for advancing kernel optimization research. KernelBench~\citep{pmlr-v267-ouyang25a} provides standardized environments for generating and evaluating optimized kernels, while other benchmarks cover specific operator families~\citep{che2009rodinia}. However, existing resources remain largely CUDA-centric. CASS~\citep{heakl2026cass} facilitates CUDA-to-HIP translation, but does not provide large-scale PyTorch-to-HIP, HIP-to-HIP, or AMD Triton data. Existing Triton benchmarks also primarily target NVIDIA assumptions that may not generalize to ROCm. AMDKernelVault addresses this gap by constructing a unified HIP/Triton corpus validated through compilation, execution-based correctness checks, and latency profiling on AMD GPUs.

\paragraph{LLM-Based Kernel Agents.}
Recent work explores LLM-based agents for kernel synthesis and optimization~\citep{chen2025cudallm,baronio2026kevin,li2025autotriton,wang2025geak}. Kevin~\citep{baronio2026kevin} demonstrates multi-turn RL for CUDA kernels, AutoTriton~\citep{li2025autotriton} and TritonRL~\citep{woo2025tritonrl} study execution-aware training for Triton generation, and GEAK~\citep{wang2025geak} provides an agentic framework for Triton optimization. AVO~\citep{chen2026avo} further explores long-horizon autonomous kernel evolution on NVIDIA GPUs. AMDKernelVault is complementary: it focuses on execution-verified AMD HIP/ROCm/Triton data and compact local policies for generate--evaluate--reflect loops. Additional discussion is provided in Appendix~\ref{app:more_related_work}.

\section{Agent-Based Dataset Generation}
\label{sec:datasets}

We present a unified framework for generating execution-verified GPU kernels across both HIP and Triton, using a shared generate--evaluate--reflect paradigm (Figure~\ref{fig:overview}).

\subsection{Unified Pipeline Architecture}

We propose a unified, agent-driven framework for large-scale GPU kernel data construction. The framework follows a shared \emph{generate--evaluate--reflect} paradigm, enabling systematic synthesis, validation, and refinement of GPU kernels grounded in standardized PyTorch references. Within this unified design, we instantiate two concrete pipelines—\textbf{HIPKernelGen} and \textbf{TritonKernelGen}—which specialize the same agent-driven workflow to the HIP and Triton ecosystems respectively.

\paragraph{Stage 1: PyTorch Reference Standardization.}
For each target operator or module kernel, we construct a function-style PyTorch reference derived from the PyTorch module input. This representation ensures reliable invocation, consistent interface semantics, and execution-based validation across both HIP and Triton backends. When available, compiler-generated kernels (e.g., Torch Inductor Triton code) are used for cross-checking but do not replace reference-based correctness.

\paragraph{Stage 2: Backend-Specific Kernel Synthesis.}
Given the standardized PyTorch reference, backend-specific agents generate candidate GPU kernels. Each candidate is subjected to compilation, execution-based correctness checks against the PyTorch reference, and latency profiling. The default HIP checker uses \texttt{torch.allclose} with \texttt{rtol=1e-4}, \texttt{atol=1e-4}, and \texttt{equal\_nan=True}. For Triton, task-specific tolerances may be configured from the reference task; otherwise, the fallback is \texttt{rtol=1e-4}, \texttt{atol=1e-3}, and \texttt{equal\_nan=False}. We use no fixed dtype- or operator-specific rule. Failures trigger reflector-guided regeneration until correctness criteria are met or the attempt limit is reached.

\paragraph{Test-case construction and validation.}
For PyTorch$\rightarrow$HIP and PyTorch$\rightarrow$Triton, test inputs are inherited from CUDA-Agent-Ops-6K~\citep{dai2026cuda}, GPUMode~\citep{kernelbook2025} and AI-CUDA-Engineer-Archive~\citep{lange2025aicudaengineer} when available and augmented with verified LLM-generated cases to cover additional shapes, strides, dtypes, and boundary conditions. HIP kernels, including HIP$\rightarrow$HIP optimization targets, are bound through \texttt{torch.utils.cpp\_extension} so that generated kernels can be executed under the same PyTorch reference tests. For Triton$\rightarrow$Triton, we extend TritonBench-style tests with ROCm-compatible cases and discard candidates that only pass input-specialized or shape-specialized checks. Thus, LLM-generated tests are not accepted as supervision by themselves; they are used as additional probes inside an execution-based validation loop.

\paragraph{HIPKernelGen.}
HIPKernelGen instantiates the unified framework for HIP kernels targeting AMD GPUs. Much of the HIP corpus was generated from August to December 2025 using GPT-5, a strong available kernel-generation model during collection. Each candidate is compiled via \texttt{hipcc}, validated against PyTorch references, and profiled for latency and speedup. Retention depends on ROCm execution rather than generator identity.

\paragraph{TritonKernelGen.}
TritonKernelGen mirrors the HIP pipeline for Triton kernels. We use GPT-oss-120B~\citep{agarwal2025gpt}, DeepSeek-R1~\citep{guo2025deepseek}, and Qwen2.5-32B~\citep{qwen2025qwen25technicalreport} to diversify implementations and reasoning traces. Each kernel is compiled and executed via Triton 3.3.0 and tested on approximately ten correctness and three performance configurations spanning shapes, strides, FP32/FP16/BF16 where applicable, and boundary cases.

\subsection{Source Data}

Our kernel datasets are built upon six complementary sources spanning synthesized CUDA-agent tasks, GitHub-derived PyTorch programs, curated Triton corpora, benchmark-style kernels, and production ROCm libraries.

\subsubsection{CUDA-Agent-Ops-6K Dataset}
The \textit{CUDA-Agent-Ops-6K} dataset~\citep{dai2026cuda} contains 6,000 synthesized operator-level PyTorch tasks for CUDA kernel generation and optimization. It is built from \texttt{torch}/\texttt{transformers} seed operators, LLM-composed multi-operator tasks, and execution-based filtering. We use it as a source for HIP generation, but the original dataset remains CUDA-centric and does not directly support HIP or ROCm.

\subsubsection{GPUMODE-KernelBook Dataset}
The \textit{GPUMODE-KernelBook}~\citep{kernelbook2025} dataset contains approximately 18,000 PyTorch modules paired with Torch Inductor Triton implementations from GitHub-derived workloads. It provides broad operator coverage across primitives, fused kernels, and larger model components, but remains CUDA/Triton-centric and does not directly provide AMD-validated HIP kernels.

\subsubsection{AI-CUDA-Engineer-Archive Dataset}
The \textit{AI-CUDA-Engineer-Archive}~\citep{lange2025aicudaengineer} dataset is built with the \textit{AI CUDA Engineer} framework from over 250 KernelBench PyTorch benchmarks. It provides PyTorch references, generated CUDA kernels, diagnostics, and profiling results, but remains CUDA-centric and does not directly support HIP.

\subsubsection{Stack-v2-dedup-Triton Dataset}
We incorporate the curated \textit{Stack-v2-dedup-Triton}~\citep{lozhkov2024starcoder} dataset, constructed by filtering approximately 13,000 candidate samples to obtain 2,269 high-quality Triton--PyTorch pairs. This source provides medium-difficulty Triton examples emphasizing correctness and moderate complexity.

\subsubsection{TritonBench-8k Dataset}
The \textit{TritonBench-8k}~\citep{li2025tritonbench} dataset consists of samples derived from web crawl data and synthetic code generated via the Ninetoothed DSL~\citep{huang2025ninetoothedtritonbasedhighleveldomainspecific}. We apply targeted curation to remove DSL-specific artifacts and ensure correctness across diverse shapes, strides, and precision types.

\subsubsection{ROCm Libraries Dataset Source}
As a production-oriented source, we leverage AMD's ROCm Libraries monorepo, focusing on \texttt{rocBLAS} and \texttt{rocSOLVER}. Unlike benchmark-style datasets, these libraries capture engineering-tested HIP kernels, launch-side logic, and idiomatic production patterns.

\subsection{Generated Kernel Corpus}

Table~\ref{tab:datasets} summarizes our released corpus. HIP and Triton kernel entries are processed through HIPKernelGen/TritonKernelGen with execution-based validation on AMD GPUs. The ROCm Libraries QA subset provides complementary production-grounded supervision rather than generated kernel-pair data.

\begin{table}[t]
\centering
\small

\begin{tabular}{lr}
\toprule
\textbf{Dataset} & \textbf{Samples} \\
\midrule
\multicolumn{2}{l}{\textit{HIP/ROCm Datasets}} \\
HIP-CudaAgent & 5,388 \\
HIP-GPUMode & 22,397 \\
HIP2HIP (Optimization) & 34,368  \\
ROCm Libraries QA & 2,377 \\
\cmidrule{2-2}
\textit{Total HIP/ROCm} & \textit{64,530} \\
\midrule
\multicolumn{2}{l}{\textit{Triton Datasets}} \\
Triton-Stack  & 2,269 \\
Triton-Bench (from TritonBench-8k)  & 7,713 \\
Triton-GPUMode & 18,000 \\
Triton-AICE  & 11,911 \\
\cmidrule{2-2}
\textit{Total Triton} & \textit{39,893} \\
\bottomrule
\end{tabular}
\caption{Summary of the corpus used in this paper. HIP and Triton kernel samples are execution-validated on AMD GPUs; ROCm Libraries QA provides production-grounded supervision.}
\label{tab:datasets}
\end{table}

\paragraph{Counting and audit.}
The 64,530 HIP/ROCm total comprises 62,153 execution-verified HIP kernel samples and 2,377 ROCm Libraries QA entries. It excludes a separately released 14,282-entry HIP-AICUDA subset that was not used in this paper. HIP-CudaAgent retains 5,388 of 6,000 upstream tasks after excluding 612 incomplete records; HIP-GPUMode contains 5,910 unique tasks and 22,397 kernel variants; and HIP2HIP contains 31,155 single-turn and 3,213 multi-turn records. Triton counts use 39,893 unique source-level samples; packaged reasoning variants may yield a different row count. Since rejection logs were not aggregated uniformly across source pipelines, we report retained counts and subset-specific validation rather than a single corpus-wide failure rate.
\paragraph{HIP Datasets.}
\textbf{HIP-CudaAgent}: Built from \textit{CUDA-Agent-Ops-6K} using HIPKernelGen. We adapt the synthesized PyTorch operator tasks into function-style references and use the generate--evaluate--reflect pipeline to produce HIP implementations. Each candidate is compiled with the ROCm toolchain, checked against the PyTorch reference, and filtered by execution-based correctness validation. This yields 5,388 validated PyTorch-to-HIP samples.

\textbf{HIP-GPUMode}: Generated from GPUMODE-KernelBook using HIPKernelGen. We first convert PyTorch modules to function-style references, then generate HIP kernels using frontier LLMs. Each candidate undergoes compilation, execution-based validation, and latency profiling. The generate--evaluate--reflect loop iterates until correctness is achieved or a maximum attempt limit is reached. This produces 5,910 unique PyTorch entries with 22,397 HIP kernel variants (avg 3.8 variants/entry).

\textbf{HIP2HIP}: We extend HIPKernelGen to kernel-to-kernel optimization by operating directly on existing HIP kernel functions or operators. Using HIP-GPUMode as a base, we generate 34,368 validated HIP kernel optimization pairs.

\paragraph{ROCm Libraries QA.}
The ROCm Libraries QA subset contains production-grounded interface-level supervision rather than generated kernel-pair data. For each interface in \texttt{rocBLAS} and \texttt{rocSOLVER}, we identify kernel-relevant code regions and generate QA pairs targeting computation semantics, index-to-tensor mappings, memory access patterns, synchronization strategies, and edge cases. This yields 1,858 entries from \texttt{rocBLAS} and 519 from \texttt{rocSOLVER} (\textbf{2,377} total), capturing production HIP development patterns.

\paragraph{Triton Datasets.}
TritonKernelGen processes all Triton sources through a unified validation pipeline, validating numerical correctness across diverse tensor shapes, strides, and dtypes (FP32/FP16/BF16) on AMD GPUs.

\textbf{Triton-Stack}: Curated from Stack-v2-dedup, filtering $\sim$13K candidates to 2,269 medium-difficulty Triton--PyTorch pairs with verified AMD compatibility, then generated and execution-verified with TritonKernelGen.

\textbf{Triton-Bench}: Derived from TritonBench-8k, removing DSL-specific artifacts and validating 7,713 samples, generated and execution-verified with TritonKernelGen.

\textbf{Triton-GPUMode}: Sourced from GPUMODE-KernelBook ($\sim$18K entries), with Triton implementations ported from torch-compile kernels to generic kernels able to handle all shapes and params, generated and execution-verified with TritonKernelGen.

\textbf{Triton-AICE}: From AI-CUDA-Engineer-Archive (11,911 samples), converting CUDA-centric Triton to AMD-validated implementations with TritonKernelGen.

Together, this yields 62,153 validated HIP kernel samples and 39,893 validated Triton--PyTorch samples for AMD GPUs, plus 2,377 production-grounded ROCm Libraries QA entries. Additional details on dataset construction, source-specific processing, metadata, and difficulty distributions are provided in Appendix~\ref{app:datasets} and Appendix~\ref{app:analysis}.

\subsection{Production-Grounded ROCm QA Supervision}

To extend GPU kernel data construction beyond benchmark-style kernels, we extract HIP-kernel-focused supervision from production-grade ROCm libraries at the interface level. This subset is not treated as direct execution-verified kernel-pair data; instead, it provides production-grounded QA supervision anchored to retrieved \texttt{rocBLAS} and \texttt{rocSOLVER} code.

\paragraph{Interface decomposition.}
For each target project, we start from a user-facing interface and decompose it into a compact kernel-relevant scope. We traverse the call graph from the interface entrypoint to identify code regions directly involved in GPU execution, including device-side kernels, host-side launch logic, parameter preparation, and tightly coupled helper utilities. This interface-level view preserves the connection between public APIs and the underlying HIP kernels, which is often lost when treating files or repositories as unstructured text.

\paragraph{Scope control.}
Production libraries contain substantial infrastructure code unrelated to kernel behavior. To avoid recursion blow-up and irrelevant context, we use project-specific allowlists and disallowlists. The disallowlist filters logging, debugging, testing, and auxiliary utilities, while the allowlist constrains traversal to kernel-relevant subsystems. Because \texttt{rocBLAS} and \texttt{rocSOLVER} differ in organization and dispatch structure, each project uses tailored traversal rules.

\paragraph{Kernel-centric QA generation.}
Given the retrieved interface context, an agent constructs HIP-kernel-focused QA pairs. The questions target concrete implementation aspects, including computation semantics, index-to-tensor mappings, memory access patterns, synchronization and reduction strategies, launch configuration, and correctness constraints. Answers are grounded in the retrieved code context rather than generic library descriptions, so the resulting supervision emphasizes actionable kernel logic.

Applying this pipeline yields 519 interface-level entries from \texttt{rocSOLVER}, comprising 226 kernel-implementation-focused samples and 293 kernel-centric QA explanations. From \texttt{rocBLAS}, we extract 1,858 entries, including 1,169 kernel-implementation samples and 689 QA-style explanations. Together, these 2,377 production-grounded entries complement benchmark-derived HIP and Triton datasets with real-world host--device orchestration and production HIP development patterns.

\subsection{Difficulty Distribution}

We classify samples into three difficulty levels for curriculum training: Level 1 covers standalone single-function kernels, Level 2 covers fused operators or single-file implementations, and Level 3 covers multi-file kernels with cross-module dependencies. The full Triton corpus comprises 11,824 L1 (29.6\%), 24,765 L2 (62.1\%), and 3,304 L3 (8.3\%) samples; the curriculum-eligible subset used per Triton RL phase is reported in Appendix~\ref{app:training}. The ROCm Libraries QA data follows a similar distribution: \texttt{rocSOLVER} (190 L1, 198 L2, 131 L3) and \texttt{rocBLAS} (971 L1, 506 L2, 381 L3).

\section{Training Compact AMD Kernel LLMs}
\label{sec:training}

AMDKernelVault is designed not only as a kernel corpus, but also as a training resource for compact AMD kernel LLMs. We use it to train Qwen3-8B with a two-stage recipe: supervised fine-tuning (SFT) for cold-start kernel generation, followed by execution-aware reinforcement learning (RL) that uses compilation, correctness, and latency feedback.

\subsection{Kernel Tasks and Metrics}

Our datasets support four kernel-centric tasks: \emph{PyTorch$\rightarrow$HIP} translates PyTorch modules into semantically equivalent HIP implementations; \emph{HIP$\rightarrow$HIP} optimizes existing HIP kernels; \emph{Text$\rightarrow$Triton} generates Triton kernels from natural-language specifications; and \emph{Triton$\rightarrow$Triton} improves Triton kernels through tiling, memory-access, and boundary-handling changes.

We evaluate kernels with three execution-based metrics: \emph{compilation pass}, successful build under the target toolchain; \emph{correctness pass}, numerical equivalence against the PyTorch reference; and \emph{speed}, kernel-level speedup over the input kernel or PyTorch baseline. Comp@$k$ and Corr@$k$ report compilation and correctness after $k$ agent iterations, while Pass@$k$ measures the probability of obtaining at least one correct result from $k$ samples. For PyTorch-to-Triton, speedup is measured against the original PyTorch implementation; for Triton-to-Triton, it is measured against the input kernel. Speed@10 averages the speedups of the best ten correct samples.

\subsection{Supervised Fine-Tuning}

Direct RL from a pretrained model is difficult in this domain because early rollouts often fail to compile or violate kernel interfaces, producing sparse feedback. We therefore first apply SFT to teach kernel syntax, binding conventions, memory-access idioms, and common implementation patterns.

For HIP, we use instruction--response pairs from the HIP portions of AMDKernelVault, where the input is a PyTorch task or source HIP kernel and the output is a validated HIP implementation. For Triton, we use reasoning-augmented examples when available, so that the model learns to plan tiling, masking, boundary checks, and memory layouts before emitting code. This stage provides the cold-start capability needed for subsequent execution-aware RL.

\subsection{Execution-Aware Reinforcement Learning}

After SFT, we optimize the model using execution feedback. HIP training uses single-turn GRPO, where each candidate is compiled, tested, and scored after one generation attempt. Triton training uses a multi-turn variant aligned with the GEAK-style generate--evaluate--reflect loop: the model proposes a kernel, observes compiler or runtime feedback, reflects on the failure or performance bottleneck, and generates an improved candidate.

The reward combines three signals: compilation, correctness, and speedup. Compilation credit provides partial feedback for syntactically and interface-compatible kernels, correctness rewards functional equivalence, and a bounded speedup term encourages performance improvements without allowing timing outliers to dominate learning. This reward is designed for the low-success regime of AMD kernel generation, where binary correctness rewards alone provide too little gradient signal. Full reward definitions and GRPO details are provided in Appendix~\ref{app:training_method_details}.

\subsection{Agentic Integration}

A key goal is to test whether the corpus can train a compact local model for an agentic kernel optimization loop. We train one Qwen3-8B policy to serve the generator, reflector, and optimizer roles. During multi-turn Triton RL, the same model produces a candidate, interprets execution feedback, and emits the next revision.

This training recipe is not presented as a new RL algorithm. Rather, it evaluates whether execution-verified AMDKernelVault data can specialize a compact policy for AMD kernel agents. Section~\ref{sec:experiments} therefore reports controlled, metric-specific comparisons under identical agent budgets.

\subsection{Training Configuration}

We train Qwen3-8B with SFT followed by RL on AMD MI-series GPUs. HIP training uses SFT followed by single-turn GRPO, while Triton training uses SFT followed by multi-turn on-policy reflection RL with curriculum scheduling. We keep the main text focused on the training recipe used to evaluate AMDKernelVault as a corpus for compact AMD kernel LLMs. Appendix~\ref{app:training_method_details} provides task serialization, SFT and GRPO objectives, reward implementation, curriculum schedule, hyperparameters, and additional analysis of reward sensitivity.
\section{Experiments}
\label{sec:experiments}

We evaluate whether AMDKernelVault can train compact AMD kernel LLMs that improve HIP and Triton kernel generation under execution-based validation. We compare Qwen3-8B trained on our corpus against frontier LLMs (GPT-5, Gemini 2.5 Pro, Claude Sonnet 4). For HIP, we evaluate PyTorch-to-HIP translation on 200 KernelBench Level-1/2 tasks~\citep{pmlr-v267-ouyang25a} and HIP-to-HIP optimization on 100 held-out samples. For Triton, we use TritonBench-G (184 kernels) and ROCmBench (31 AMD-specific kernels) with the same 3-iteration OptimAgent-v2 protocol~\citep{wang2025optimv2}. Each model serves as generator, reflector, and optimizer under identical agent settings, and all reported Triton inference runs use MI355X GPUs.

\paragraph{Train/test separation and overlap controls.}
Evaluation cases are excluded from training by source and task identifier. HIP evaluation uses 200 held-out KernelBench-derived translation tasks and 100 held-out HIP-to-HIP optimization tasks; Triton evaluation uses the curated 184-kernel TritonBench-G and 31-kernel ROCmBench sets. These controls prevent exact task-identifier reuse. However, because generated kernels can implement semantically similar operator patterns using different code, exact-match filtering cannot exclude every functional near-duplicate. We did not conduct an exhaustive corpus-wide semantic-overlap search and therefore interpret the reported results as benchmark-specific rather than as evidence of overlap-free generalization.

\paragraph{Evaluation protocol and metrics.}
All Triton model comparisons use the same OptimAgent-v2 configuration: \texttt{num\_offsprings=1}, \texttt{max\_iteration=3}, identical generator/reflector/optimizer role assignment, and no profiler feedback. Thus, each row changes the model checkpoint while holding the external agent scaffold and execution budget fixed. Comp@$k$ and Corr@$k$ report the fractions of tasks that compile and pass numerical validation by iteration $k$, respectively; Pass@$k$ reports the probability of obtaining at least one correct result from $k$ independently sampled candidates. For PyTorch-to-kernel generation, speedup is measured against the PyTorch reference; for kernel-to-kernel optimization, it is measured against the input kernel. Speed comparisons include only correctness-passing candidates.

\paragraph{Benchmark coverage.}
The evaluation covers attention and softmax variants, layer and RMS normalization, GEMM-family operations, elementwise transformations, reductions, fused blocks, MoE-style routing, and online-softmax-style fused reductions. KernelBench Level-1/2 tests emphasize operator implementation and composition, while TritonBench-G spans difficulty levels from standalone operators to harder fused kernels; ROCmBench adds AMD-specific cases. This breadth tests multiple common deep-learning kernel patterns, but it is not a per-family performance study: the benchmark sizes are limited, and we do not report operator-family-specific pass rates.

\paragraph{Triton reproducibility setup.}
Triton workflows use Triton 3.3.0. Corpus validation and profiling use MI325 and MI350 GPUs; Triton SFT and RL use 32 MI325X GPUs; and reported Triton inference uses MI355X GPUs. The corpus targets gfx942 for MI325X stages and gfx950 for MI350/MI355X stages. ROCm, PyTorch, and Python versions vary across data-production snapshots and are not pinned to one release. Warm-up and measurement use 25\,ms and 100\,ms time budgets, respectively, with per-kernel iteration counts derived from a short timing estimate. We report median GPU-event timing, flush L2 before each measured call, synchronize after the measured loop, and discard no timing observations.

\subsection{HIP Kernel Results}

Table~\ref{tab:hip} evaluates PyTorch-to-HIP translation. Frontier LLMs compile many candidates but achieve limited correctness; GPT-5 reaches 75.5\% compilation but only 24.0\% correctness. Training on AMDKernelVault substantially improves Qwen3-8B: SFT raises correctness to 31.5\%, and RL further improves it to 34.0\%, outperforming all frontier baselines in correctness.

\begin{table}[t]
\centering
\small
\resizebox{\columnwidth}{!}{
\begin{tabular}{lcc}
\toprule
\textbf{Model} & \textbf{Comp Acc} & \textbf{Corr Acc} \\
\midrule

Gemini 2.5 Pro & 70.5\%& 18.5\% \\
Claude Sonnet 4 & \textbf{85.0\%} & 22.0\% \\
GPT-5 &  75.5\%& 24.0\% \\
\midrule
Q3-8B (base) & 50.0\% & 14.5\% \\
Q3-8B + SFT & 78.0\% & 31.5\% \\
\rowcolor{green!10} \textbf{Q3-8B + SFT + RL} & 83.5\% & \textbf{34.0\%} \\
\bottomrule
\end{tabular}
}
\caption{Pass@1 results of PyTorch-to-HIP translation (200 tasks from KernelBench Levels 1 and 2).  Q3-8B denotes Qwen3-8B.}
\label{tab:hip}
\end{table}

Table~\ref{tab:hip2hip} evaluates HIP-to-HIP optimization. The trained model reaches 84\% correctness, close to GPT-5 (87\%) and substantially above the base model (18\%), while maintaining positive speedup. These results indicate that the HIP portion of AMDKernelVault supports both translation and optimization tasks.

\begin{table}[t]
\centering
\small
\setlength{\tabcolsep}{3pt}
\resizebox{\columnwidth}{!}{
\begin{tabular}{lccc}
\toprule
\textbf{Model} & \textbf{Comp Acc} & \textbf{Corr Acc} & \textbf{Speed@10}\\
\midrule

Gemini 2.5 Pro & 81\% & 69\% & 1.15$\times$ \\
Claude Sonnet 4 & \textbf{93\%} & 67\% & 1.11$\times$ \\
GPT-5 & 89\% & \textbf{87\%} &\textbf{1.32$\times$} \\
\midrule
Q3-8B (base) & 26\% & 18\%  &1.02$\times$\\
Q3-8B + SFT & 86\% & 70\%  & 1.06$\times$\\
\rowcolor{green!10} \textbf{Q3-8B + SFT + RL} & 90\% & 84\% & 1.14$\times$ \\
\bottomrule
\end{tabular}
}
\caption{Pass@1 results of HIP-to-HIP kernel optimization (100 held-out modules). Q3-8B denotes Qwen3-8B and speed@10 denotes average speedup achieved
over baseline of best 10 correct samples.}
\label{tab:hip2hip}
\end{table}

\subsection{Triton Kernel Results}

Table~\ref{tab:triton_combined} reports TritonBench-G results under the fixed 3-iteration GEAK OptimAgent-v2 evaluator. The base Qwen3-8B model reaches 4.9\% Corr@3, while SFT improves to 10.3\% and SFT+RL reaches 33.2\%, compared with GPT-5 at 15.2\% and Claude Sonnet 4 at 29.8\%. Since all rows use the same scaffold and budget, this controlled comparison demonstrates the corpus's utility for specializing a compact local model.

\begin{table}[h]
\centering

\footnotesize
\setlength{\tabcolsep}{3pt}
\resizebox{\columnwidth}{!}{
\begin{tabular}{lcccc}
\toprule
\textbf{Model} & \textbf{Corr@1} & \textbf{Corr@3} & \textbf{Comp@3} & \textbf{Speed} \\
\midrule
Gemini 2.5 Pro & 8.7\% & 23.4\% & 32.6\% & 1.31$\times$ \\
Claude Sonnet 4 & 7.6\% & 29.8\% & 46.7\% & 1.34$\times$ \\

GPT-5 & 3.8\% & 15.2\% & 28.8\% & 1.05$\times$ \\
\midrule
Q3-8B (base) & 0.0\% & 4.9\% & 14.7\% & 0.97$\times$ \\
Q3-8B + SFT & 4.3\% & 10.3\% & 27.7\% & 1.19$\times$ \\
\rowcolor{green!10} \textbf{Q3-8B + SFT + RL} & \textbf{12.0\%} & \textbf{33.2\%} & \textbf{67.9\%} & \textbf{1.46$\times$} \\
\bottomrule
\end{tabular}
}
\caption{TritonBench-G results under the fixed 3-iteration GEAK OptimAgent-v2 evaluator. All models use the same agent budget and role assignment; only the LLM checkpoint changes. Corr/Comp denote correctness/compile accuracy, and Q3-8B denotes Qwen3-8B.}
\label{tab:triton_combined}
\end{table}

Table~\ref{tab:rocmbench} shows results on ROCmBench. The trained Qwen3-8B model obtains the highest Corr@3 (41.94\%), compared with Claude Sonnet 4 (35.48\%) and GPT-5 (29.03\%). Claude Sonnet 4 remains stronger on Comp@3 (67.74\% versus 58.06\%) and speed (1.82$\times$ versus 1.61$\times$), so the advantage is specific to correctness.

\paragraph{Difficulty Analysis.}
Our model leads the compared frontier baselines on D1--D3 of TritonBench-G, but Claude Sonnet 4 solves more D4 kernels (18/84 versus 13/84), and D5 remains unsolved by all models. Full results are provided in Appendix~\ref{app:results}.

\begin{table}[t]
\centering

\footnotesize
\setlength{\tabcolsep}{3pt}
\resizebox{\columnwidth}{!}{
\begin{tabular}{lcccc}
\toprule
\textbf{Model} & \textbf{Corr@1} & \textbf{Corr@3} & \textbf{Comp@3} & \textbf{Speed} \\

\midrule
Gemini 2.5 Pro & 12.90\% & 25.81\% & 41.94\% & 1.47$\times$ \\
Claude Sonnet 4 & 19.35\% & 35.48\% & \textbf{67.74\%} & \textbf{1.82$\times$} \\
GPT-5 & 16.13\% & 29.03\% & 35.48\% & 1.31$\times$ \\
Q3-8B (base) &  3.22\%  &  6.45\% & 19.35\% & 0.74$\times$ \\
\midrule
\rowcolor{green!10} \textbf{Q3-8B + SFT + RL} & \textbf{22.58\%} & \textbf{41.94\%} & 58.06\% & 1.61$\times$ \\
\bottomrule
\end{tabular}
}
\caption{ROCmBench (AMD) results under the fixed 3-iteration GEAK OptimAgent-v2 evaluator. All rows use the same agent budget, and Q3-8B denotes Qwen3-8B.}

\label{tab:rocmbench}
\end{table}

\paragraph{Limitations and Failure Analysis.}
Despite strong results on D1--D3 tasks, expert-level kernels remain challenging. Common failure modes include incorrect tile/block size choices for unusual tensor shapes, synchronization errors in multi-stage reductions, and memory-coalescing failures under AMD's wavefront execution model. These failures suggest that future work should incorporate richer profiler-guided feedback and broader expert-kernel supervision.

\subsection{Ablation Studies}

We conduct ablations to assess the importance of data scale, SFT cold start, and multi-turn RL. Performance improves with more training data, SFT provides a critical initialization before RL, and multi-turn RL contributes +13.1\% on Triton. Detailed ablation tables and reward-sensitivity results are provided in Appendix~\ref{app:ablation}.

\section{Conclusion}
\label{sec:conclusion}

AMDKernelVault is an open HIP and Triton kernel resource for recent AMD CDNA GPUs, supported by AMD-native generation and validation infrastructure. As a utility demonstration, we specialize Qwen3-8B for a GEAK-style loop; under fixed budgets, it achieves the highest correctness on selected benchmarks but does not uniformly lead compilation or speed. We release the corpus, documentation, and code to support future AMD kernel research.

\section*{Limitations}

Our evaluation is limited to AMD GPU environments and kernel-generation benchmarks available to us. Expert-level kernels remain difficult across all tested models, and the reported performance depends on the specific ROCm, Triton, and hardware configurations used during validation. Although we include production-derived ROCm library supervision, some generated kernels may still inherit suboptimal structures from upstream CUDA/Triton sources or from LLM-generated candidates.

Several gaps remain for future work: broader corpus-level profiling of latency, memory footprint, register pressure, and occupancy; per-operator-family pass-rate analysis beyond the current difficulty breakdown; larger held-out HIP evaluation sets; and evaluation beyond deep-learning workloads, including HPC, sparse/graph, and database kernels. Future work should also incorporate richer profiler-guided feedback and broader expert-kernel supervision.

We do not report a corpus-wide semantic near-duplicate audit or uniformly aggregated candidate-rejection rates because source pipelines use different logging formats. Software environments also vary across data-production snapshots, although Triton is fixed to version 3.3.0 for the reported Triton workflow.

\section*{Ethical Considerations and Impact Statement}

This work focuses on GPU kernel generation and optimization for AMD hardware. The released artifacts consist of program code, generated kernels, execution metadata, and QA supervision grounded in public ROCm library code. Provenance, usage conditions, and per-source license notes are included with the public artifact.

The expected positive impact is improved efficiency for AI workloads. Faster generated kernels can reduce training and inference runtime, while a compact local model can reduce repeated frontier-LLM calls during kernel search. These improvements may lower computational cost and energy use for AMD GPU deployments.

As with other code-generation systems, generated code may be incorrect, inefficient, or unsafe if deployed without validation. We mitigate this risk by emphasizing execution-based compilation, correctness testing, and latency profiling throughout data construction and evaluation. The work is narrowly focused on performance engineering for GPU kernels, and generated kernels should be validated in the target environment before production use.

\bibliography{custom}

\clearpage
\appendix

\newpage

\section{Artifact, License, and AI-Assistance Statement}
\label{app:artifact_statement}

\paragraph{Artifacts and licenses.}
The data and documentation are available at
\url{https://huggingface.co/datasets/amd/AIG-Datasets}.
The associated training, RL, and HIP kernel-generation code is available at
\url{https://github.com/AMD-AGI/hip_kernel_llm_lab}.
The release includes corpus-audit, validation-protocol, and
license/provenance documentation. Upstream resources remain subject
to their source-specific terms: CUDA-Agent-Ops-6K and
AI-CUDA-Engineer-Archive are released under CC-BY-4.0;
the current GPUMODE/KernelBook release uses the June 9 Researcher
Reciprocity License; TritonBench-8k uses Apache-2.0; and Stack v2
retains the BigCode and original-repository terms. ROCm Libraries
are licensed per component; the source components used here include
rocBLAS under the MIT License and rocSOLVER under the BSD-2-Clause
License. The ROCm Libraries QA subset contains interface-level
supervision and provenance references rather than a redistribution
of the complete source trees. Exact source revisions and applicable
terms are documented in the released provenance metadata.

\paragraph{Data content and filtering.}
The released data consists of program code, kernel implementations, task metadata, execution logs, and interface-level QA supervision grounded in public ROCm library code. It is not intended to contain personally identifying information or offensive content, and the data construction pipeline filters generated artifacts for executability, determinism, and task validity. Generated kernels are filtered through compilation, execution-based correctness checks, and ROCm validation; QA entries are grounded in retrieved public code contexts rather than user data.

\paragraph{AI assistance.}
We used AI assistants during this project for two purposes. First, frontier LLMs such as GPT-5 were used inside HIPKernelGen/TritonKernelGen to generate candidate GPU kernels and reflection feedback, which were then compiled, tested, profiled, and filtered by the authors' execution-based pipeline. Second, general AI writing assistants were used to help polish wording and organize the manuscript. All experimental design, dataset filtering, validation, analysis, and final claims were checked and approved by the authors.

\section{More Related Work}
\label{app:more_related_work}
\paragraph{Production Library Code as Supervision.}
Benchmark-derived datasets provide diversity but are typically limited to standalone kernels. In contrast, production library code from projects such as \texttt{rocBLAS} and \texttt{rocSOLVER} captures interface-level implementations spanning multiple files and reflecting production-grade engineering practices. These libraries encode years of architecture-specific optimization and correctness engineering. AMDKernelVault incorporates this signal through an interface-aware extraction pipeline that produces kernel-centric supervision grounded in real HIP code.
\paragraph{Positioning Relative to Multi-Turn RL Kernel Agents.}
Prior multi-turn systems, including Kevin~\citep{baronio2026kevin}, demonstrate that iterative execution feedback can improve kernel generation. We do not claim that multi-turn feedback alone is new. Our setting differs in two ways. First, AMD HIP/Triton generation starts from a lower-success regime, where early compilation and correctness rewards are sparse. Second, our goal is to train a model that can be deployed inside an agent-training framework, rather than only evaluate an external LLM called by an agent wrapper. In GEAK-style workflows, repeated LLM calls normally handle generation, reflection, and optimization. We instead train one compact policy on execution-feedback trajectories so that the same model can serve these roles locally under ROCm.
\paragraph{On Comparing to Kevin and Devin-Class Agents.}
Kevin is trained to emit raw CUDA via \texttt{torch.utils.cpp\_extension.load\_inline} on NVIDIA hardware, using CUDA-specific primitives such as \texttt{\_\_shfl\_xor\_sync} and \texttt{\_\_shared\_\_} memory. Our evaluation targets AMD GPUs through HIP/ROCm and TritonBench-G/ROCmBench under ROCm. A same-protocol comparison would require Kevin to emit AMD-compatible Triton or HIP kernels, which it was not trained for. We therefore treat Kevin as evidence that RL on execution feedback is effective in CUDA settings, while focusing our empirical comparison on models evaluated under the same GEAK OptimAgent-v2 protocol. Devin is a hosted coding agent without a benchmark-accessible interface exposing the fixed generator/reflector/optimizer role assignment used here, so we do not include it as a direct baseline.
\begin{table*}[t]
\centering

\small
\setlength{\tabcolsep}{3pt}
\renewcommand{\arraystretch}{1.16}
\begin{tabularx}{\textwidth}{@{}
  >{\raggedright\arraybackslash}p{0.14\textwidth}
  >{\raggedright\arraybackslash}p{0.15\textwidth}
  >{\raggedright\arraybackslash}p{0.16\textwidth}
  >{\raggedright\arraybackslash}p{0.12\textwidth}
  Y
  Y
  @{}}
\toprule
\textbf{System} & \textbf{Target} & \textbf{Training} & \textbf{\makecell[l]{Multi-turn\\feedback}} & \textbf{\makecell[l]{Performance\\signal}} & \textbf{\makecell[l]{Role\\internalization}} \\
\midrule
Kevin & CUDA / KernelBench & Multi-turn RL & Yes & Correctness + speedup & Kernel refinement, not AMD/GEAK roles \\
AutoTriton & Triton & SFT + GRPO/RLVR & No explicit multi-role loop & Rule + execution rewards & No generator/reflector/optimizer loop \\
GEAK & AMD Triton & No model training & Inference-time only & Correctness + latency & Separate frontier-LLM calls \\
\textbf{AMDKernelVault} & \textbf{AMD HIP/Triton} & \textbf{SFT + execution-aware RL} & \textbf{Yes} & \textbf{Compile + correctness + speedup} & \textbf{One 8B policy for all LLM roles} \\
\bottomrule
\end{tabularx}
\begin{flushleft}
\footnotesize \emph{References:} Kevin~\citep{baronio2026kevin}; AutoTriton~\citep{li2025autotriton}; GEAK~\citep{wang2025geak}.
\end{flushleft}
\caption{Positioning relative to closely related kernel-generation systems.}
\label{tab:related_positioning_appendix}
\end{table*}

\section{Dataset Details}
\label{app:datasets}

\subsection{HIP Kernel Datasets}

\subsubsection{HIP-CudaAgent Dataset}
The HIP-CudaAgent Dataset is derived from \textit{CUDA-Agent-Ops-6K} using the HIPKernelGen pipeline:

1. \textbf{PyTorch Task Normalization.} The synthesized operator-level PyTorch tasks are converted into function-style references with input construction and correctness-checking logic.

2. \textbf{HIP Kernel Generation.} An LLM-based generator produces HIP implementations for each normalized PyTorch task. Each candidate is compiled with the ROCm toolchain and validated against the PyTorch reference through execution-based tests.

3. \textbf{Iterative Refinement.} Compilation errors, runtime failures, and numerical mismatches trigger reflector-guided regeneration until correctness is achieved or the maximum attempt limit is reached.

Final: 5,388 validated PyTorch-to-HIP samples.

\subsubsection{HIP-GPUMode Dataset}

The HIP-GPUMode Dataset is generated from the GPUMODE-KernelBook dataset using the HIPKernelGen pipeline:

\begin{enumerate}
\item \textbf{PyTorch Module to Function Conversion.} The first-stage agent converts PyTorch module-level code into function-style PyTorch code with correctness-checking logic to filter functionally equivalent pairs.
\item \textbf{HIP Kernel Generation.} The second-stage agent translates each PyTorch module into HIP code using an LLM-based generator. Each candidate undergoes compilation, execution-based correctness validation, and latency measurements.
\item \textbf{Iterative Refinement.} Failed kernels trigger reflector-guided feedback for iterative regeneration until valid or maximum attempts reached.
\end{enumerate}

Final: \textbf{5,910} unique PyTorch modules with \textbf{22,397} HIP kernel optimization variants (avg 3.8 variants/entry).

\subsubsection{HIP-to-HIP Optimization Pairs}

We extend HIPKernelGen to support kernel-level HIP-to-HIP optimization by operating directly on existing HIP kernel functions annotated with \texttt{\_\_global\_\_} or \texttt{\_\_device\_\_}. The input is a baseline HIP kernel and the output is an optimized kernel with equivalent semantics. Final: \textbf{34,368} validated HIP kernel optimization pairs.

\subsection{Triton Kernel Dataset}

We construct a large-scale Triton kernel dataset using TritonKernelGen, aggregating multiple complementary sources under a unified validation pipeline.

\subsubsection{Stack-v2-dedup (curated)}
We filter and extract PyTorch code containing \texttt{torch} operators and/or Torch Inductor constructs, wrap entries into function-style references, and validate against a high-coverage test suite. From $\sim$13,000 candidates, we obtain \textbf{2,269} medium-difficulty, functionally correct Triton--PyTorch pairs.

\subsubsection{TritonBench-train (modified)}
We curate \textbf{7,713} samples from the original TritonBench-8k. Synthetic kernels generated via the \textit{ninetoothed} DSL are refined or discarded to remove DSL-specific artifacts and input-specific patterns. We expand test coverage to ensure generalization across shapes, strides, and precision types.

\subsubsection{GPUMODE-KernelBook}
We incorporate $\sim$\textbf{18,000} PyTorch entries with Triton code extracted via Torch Inductor, using them as a primary source for operator coverage and structural guidance.

\subsubsection{AI-CUDA-Engineer-Archive}
We leverage \textbf{11,911} usable PyTorch + CUDA pairs. Where Triton code is absent, the agent synthesizes Triton implementations against the shared PyTorch reference and expanded test suite.

\paragraph{Summary.}
Across sources, TritonKernelGen aggregates \textbf{39,893} validated samples. Available metadata varies by source subset and packaged representation; where present, it records construction-time validation outcomes, latency, and precision/shape coverage.

\subsection{ROCm Libraries QA Dataset}

Interface-level extraction from AMD ROCm Libraries using depth-first call graph traversal:

\begin{itemize}[leftmargin=*,itemsep=2pt]
\item \textbf{rocSOLVER}: 519 entries (226 kernel implementations, 293 QA explanations). Difficulty: 190 L1, 198 L2, 131 L3.
\item \textbf{rocBLAS}: 1,858 entries (1,169 implementations, 689 QA explanations). Difficulty: 971 L1, 506 L2, 381 L3.
\end{itemize}

Total: \textbf{2,377} production-grounded supervision entries capturing production HIP development patterns, interface-driven kernel logic, and host--device orchestration.

\subsection{Difficulty Distribution}

Table~\ref{tab:difficulty_distribution} shows the full difficulty statistics used to support the curriculum discussion in Section~\ref{sec:datasets}. Level 1 contains standalone kernels with relatively direct data flow, Level 2 covers fused or single-file implementations requiring coordinated memory access, and Level 3 includes multi-file or interface-level code with cross-module dependencies. The Triton corpus is dominated by Level 2 examples, while the ROCm Libraries QA subset contains a larger fraction of Level 3 entries because production interfaces often involve host-side orchestration and multi-file kernel dispatch.

\begin{table*}[h]
\centering
\small
\begin{tabular}{lrrr}
\toprule
\textbf{Level} & \textbf{Triton} & \textbf{rocBLAS} & \textbf{rocSOLVER} \\
\midrule
Level 1 (Standalone) & 11,824 (29.6\%) & 971 & 190 \\
Level 2 (Fused/single-file) & 24,765 (62.1\%) & 506 & 198 \\
Level 3 (Multi-file) & 3,304 (8.3\%) & 381 & 131 \\
\bottomrule
\end{tabular}
\caption{Difficulty distribution across datasets.}
\label{tab:difficulty_distribution}
\end{table*}

\section{Dataset Analysis}
\label{app:analysis}

This section analyzes the resulting datasets in terms of scale, operator coverage, diversity, and architecture-relevant properties.

\subsection{Dataset Summary}

Table~\ref{tab:datasets} summarizes the scale and composition of AMDKernelVault. The corpus contains 64K+ HIP/ROCm supervision entries and approximately 40K Triton kernels, spanning PyTorch-to-kernel generation, kernel-to-kernel optimization, and production-grounded code understanding. The HIP/ROCm portion combines HIP-CudaAgent and HIP-GPUMode for PyTorch-to-HIP generation, HIP2HIP for optimizing existing HIP kernels, and ROCm Libraries QA for interface-grounded supervision from rocBLAS and rocSOLVER.

The Triton portion aggregates four complementary sources into the Triton-Curated corpus: curated web/code examples from Triton-Stack, benchmark-derived kernels from Triton-Bench, Torch-Inductor-derived programs from Triton-GPUMode, and AI-CUDA-Engineer-derived tasks from Triton-AICE. Together, these subsets provide broad AMD-validated coverage across generated operator workloads, benchmark-style kernels, and production-grounded HIP/Triton programming patterns.

\subsection{Operator and Workload Coverage}

Across both HIP and Triton datasets, kernels span common deep learning operator families, including elementwise transforms, reductions, normalization layers (e.g., layer normalization), matrix and tensor transformations, and composite fusion patterns produced by Torch Inductor.

Because workloads are derived from real PyTorch programs and benchmark suites, kernels reflect practical memory access patterns, shape variability, and fusion structures encountered in training and inference workloads, rather than synthetic microkernels.

The ROCm Libraries QA dataset extends coverage to production-grade linear algebra primitives from rocBLAS (e.g., GEMM, TRSM, SYMM) and rocSOLVER (e.g., QR factorization, eigensolvers), providing supervision grounded in multi-file, interface-driven kernel implementations.

\subsection{Kernel Diversity and Optimization Variants}

The datasets exhibit substantial diversity in valid kernel implementations for the same functional operators. In HIP-GPUMode, multiple HIP kernels are provided per PyTorch entry, differing in block sizes, thread mapping strategies, tiling schemes, memory access orderings, and reduction structures.

Similarly, Triton-Curated contains multiple kernel realizations for overlapping operators, capturing variation in program structure, scheduling decisions, and memory layouts at the DSL level. This diversity exposes non-trivial performance trade-offs and supports studies in kernel ranking, selection, and agent-driven optimization.

Such multiplicity of correct implementations is essential for learning-based systems that aim to reason over optimization spaces rather than merely reproducing reference kernels.

\subsection{Data Quality and Bias Mitigation}

Because a large fraction of the corpus is produced by automated agents, we explicitly filter for execution quality rather than trusting raw model outputs. Candidates must compile, pass reference-based tests, and satisfy ROCm execution constraints before entering the released set. For performance-oriented subsets, multiple candidates are generated and the retained variants are selected after latency measurement on AMD hardware. Public expert-level HIP kernels are scarce, so the corpus combines production kernels from ROCm Libraries with rigorously filtered generated variants. This design removes obviously poor kernels while preserving multiple correct implementations when they expose different optimization strategies.

We also mitigate NVIDIA-centric bias in four ways. First, CUDA-derived samples are treated as seeds, not final AMD supervision: they are translated, compiled, and filtered under ROCm. Second, generated HIP variants are re-optimized and measured on AMD hardware rather than accepted solely for syntactic portability. Third, ROCm Libraries provide expert-written HIP examples from \texttt{rocBLAS} and \texttt{rocSOLVER}, anchoring the dataset in production AMD code. Fourth, Triton samples are refactored and tested for shape generality, stride awareness, and AMD-compatible scheduling assumptions. These checks reduce reliance on NVIDIA-specific warp-size, tensor-core, and tiling assumptions while still preserving useful algorithmic structure from upstream corpora.

\subsection{Portability and Correctness Outcomes}

All released HIP kernels compile and pass execution-based correctness checks on AMD ROCm after filtering. Triton-Curated kernels likewise compile and pass correctness validation on AMD GPUs under expanded test suites covering diverse shapes, strides, and precision modes.

Because upstream corpora differ substantially in portability, Portability filtering removes kernels that rely on non-general indexing, device-specific assumptions, or input-specialized schedules. As a result, the released datasets represent AMD-valid kernel implementations rather than raw collections from heterogeneous sources.

\subsection{Validation Metadata}

Validation metadata is subset-dependent. Where available, packaged records include source identifiers, task or reference information, dtype/shape coverage, compiler or runtime status, latency, and speedup relative to the relevant baseline. Some packaged \texttt{verification} fields are empty even when construction-time validation was performed, and rejected-candidate logs were not retained uniformly across pipelines. For ROCm Libraries QA, available provenance can include source paths and interface-level context. We therefore do not claim that every released record contains all fields needed to reconstruct the complete validation trajectory.

\subsection{Architecture-Relevant Properties (Triton)}

To assess whether Triton-Curated reduces common vendor-shaped artifacts present in upstream corpora, we analyze kernel parameterization and code-shape properties that often encode device-specific assumptions, such as distributions of \texttt{num\_warps}, \texttt{num\_stages}, and block-size choices.

Compared to raw crawled or DSL-generated Triton code, Triton-Curated exhibits greater diversity in scheduling parameters and fewer hard-coded constants tied to specific hardware configurations. This reflects both execution-based filtering on AMD and agent-driven refactoring toward stride-aware and shape-general indexing, which are required for correctness across ROCm backends. During generation and validation, prompts and feedback expose AMD-specific constraints such as wavefront behavior, LDS capacity, compute-unit occupancy, and target architectures including gfx942 and gfx950.

\subsection{Comparison with Existing GPU Kernel Datasets}

Existing kernel datasets and benchmarks primarily target CUDA or Nvidia-backed Triton execution, limiting their applicability to ROCm-based platforms. In contrast, HIP-CudaAgent and HIP-GPUMode provide native HIP kernels that execute on AMD GPUs, while Triton-Curated supplies an AMD-validated Triton corpus aligned with the same PyTorch references.

This enables systematic research on AMD-focused kernel optimization, cross-platform transpilation, and heterogeneous GPU performance modeling that is difficult to conduct using Nvidia-centric benchmarks alone.

\subsection{Implications for Learning-Based Kernel Research}

By combining real-world workload coverage, verified executability on AMD GPUs, and diverse optimization variants across both HIP and Triton representations, the released datasets support supervised fine-tuning, reinforcement learning, and agent-based kernel synthesis under realistic hardware constraints.

These properties enable research on performance-aware kernel generation, automatic optimization strategies, and cross-abstraction learning, substantially lowering the barrier to data-driven GPU systems research on AMD platforms.

\section{More Details of Compact LLM Training}
\label{app:training_method_details}

This appendix provides implementation details for the training recipe summarized in Section~\ref{sec:training}.

\subsection{Task Serialization and SFT Objective}

Each training instance is serialized as an instruction--response pair. The instruction contains the task specification, such as PyTorch reference code, a natural-language Triton prompt, or an input HIP/Triton kernel. The response contains the target kernel implementation, optionally preceded by reasoning annotations for Triton examples.

SFT optimizes the standard next-token likelihood:
\begin{equation}
    \mathcal{L}_{\text{SFT}}(\theta)
    =
    -\mathbb{E}_{(x,y^*)}
    \left[
    \log \pi_\theta(y^* \mid x)
    \right],
    \label{eq:sft}
\end{equation}
where $x$ is the task specification and $y^*$ is the target implementation.

For HIP, training examples use direct instruction--response formatting without additional reasoning augmentation. For Triton, when available, we include Chain-of-Thought-style planning before the final kernel:
\[
y^* = [\text{reasoning}, \text{kernel}].
\]
This helps the model learn tiling choices, masking logic, memory-access patterns, and boundary handling. All SFT runs use full-parameter fine-tuning rather than PEFT/LoRA; in preliminary experiments, parameter-efficient adapters underperformed for this setting because the model has to learn both low-level kernel syntax and agentic self-reflection behavior, and the coupled generation and debugging skills required broader updates than adapter-only training provided.

\subsection{Reward Design}

Execution-aware RL uses a hierarchical reward:
\begin{equation}
r = r^{\text{compile}} + r^{\text{correct}} + r^{\text{perf}},
\label{eq:reward}
\end{equation}
where
\[
r^{\text{compile}} = 0.4,\quad
r^{\text{correct}} = 1.0,\quad
r^{\text{perf}} \leq 1.5.
\]
The compilation reward is assigned when the candidate builds successfully under the target backend. The correctness reward is assigned when the candidate passes numerical validation against the PyTorch reference. The performance reward is defined as
\begin{equation}
r^{\text{perf}} =
\min\left(0.375 \cdot (\log_2 s)^2, 1.5\right),
\label{eq:perf_reward}
\end{equation}
for speedup $s \geq 1$.

This design gives partial credit to near-valid kernels that compile but fail correctness, which is important in the low-success regime of kernel RL. The bounded performance term rewards meaningful speedups while reducing sensitivity to noisy latency measurements, JIT effects, or autotuning outliers. Typical fast-converging speedups during RL sit well below the 1.5 cap, so this cap functions as a conservative stability guard rather than a tuned hyperparameter; the reward-sensitivity analysis (Appendix~\ref{app:ablation}, Table~\ref{tab:reward_sensitivity}) confirms low sensitivity to the exact cap value.

\subsection{GRPO Optimization}

We use Group Relative Policy Optimization (GRPO) for execution-aware RL~\citep{shao2024deepseekmath}. Standard GRPO normalizes advantages by reward variance:
\[
\hat{A}_i = \frac{r_i - \bar{r}}{\sigma_r + \epsilon}.
\]
In our setting, many early groups contain uniformly failed samples or samples with very similar partial rewards. We therefore disable variance normalization and use:
\begin{equation}
\hat{A}_i = r_i - \bar{r}.
\label{eq:advantage}
\end{equation}

The policy update follows PPO-style clipping:
\begin{equation}
\mathcal{L}(\theta)
=
-\mathbb{E}
\left[
\min\left(
\rho_t \hat{A}_t,
\mathrm{clip}(\rho_t, 1\pm\epsilon)\hat{A}_t
\right)
\right]
+
\beta D_{\mathrm{KL}}.
\label{eq:ppo}
\end{equation}

\subsection{Multi-Turn On-Policy Reflection}

For Triton, each rollout follows a multi-turn generate--evaluate--reflect process. At turn $t$, the policy observes the task prompt, previous candidate code, compiler output, correctness failures, and optional latency feedback. It then emits both a reflection and a revised kernel. The multi-turn return is:
\begin{equation}
G(\tau) = \sum_{t=1}^{T} \gamma^{t-1} r_t,
\label{eq:multiturn_return}
\end{equation}
with $\gamma = 0.6$ and $T = 3$.

Because the same policy produces both reflection and code, the model learns feedback interpretations that are actionable for its own generation behavior. This aligns training with deployment in GEAK-style agent loops, where the trained model serves as generator, reflector, and optimizer.

\subsection{Curriculum and Hyperparameters}

HIP training uses SFT followed by single-turn GRPO. Triton training uses SFT followed by multi-turn GRPO with on-policy reflection and curriculum scheduling. The Triton curriculum gradually shifts from easier kernels to harder samples, reducing the fraction of Level-1 examples while increasing Level-2 and Level-3 tasks.

We train Qwen3-8B-Instruct with the following configuration:
\begin{itemize}[leftmargin=*,itemsep=1pt]
\item HIP SFT: 8$\times$MI250, 4 epochs.
\item HIP RL: 8$\times$MI325X, 2 epochs, batch size 12, 6 rollouts.
\item Triton SFT: 32$\times$MI325X, 2 epochs.
\item Triton RL: multi-turn GRPO, 2.7 epochs, 3 turns.
\end{itemize}

\subsection{Reward Sensitivity and Design Rationale}

Binary correctness rewards are insufficient in early AMD kernel RL because most candidates fail to compile or fail numerical validation. Under such sparse feedback, many GRPO groups produce zero or near-zero useful advantage. The compile reward increases the fraction of samples with usable signal, while disabled variance normalization prevents small reward differences from being amplified or collapsed by unstable group statistics.

Ablations in Appendix~\ref{app:ablation} and Table~\ref{tab:reward_sensitivity} evaluate these choices. Removing multi-turn reflection, removing SFT cold start, or weakening the hierarchical reward reduces performance, confirming that execution-aware training is important for turning AMDKernelVault into an effective training resource for compact AMD kernel LLMs.

\section{Evaluation Details}
\label{app:eval_details}

\paragraph{Benchmark scope.}
Our execution-intensive evaluation uses 200 PyTorch-to-HIP tasks, 100 HIP-to-HIP optimization tasks, 184 TritonBench-G kernels, and 31 ROCmBench kernels. These sets require compilation and on-device validation and are suitable for controlled comparisons, but they do not establish performance over every AMD operator or workload family.

\paragraph{Agent-budget sensitivity.}
Our main results use a fixed 3-iteration budget. On TritonBench-G, Q3-8B+SFT+RL improves from 12.0\% Corr@1 to 33.2\% Corr@3, compared with 7.6\% to 29.8\% for Claude Sonnet 4 and 3.8\% to 15.2\% for GPT-5. On ROCmBench, it improves from 22.58\% to 41.94\%, compared with Claude Sonnet 4 from 19.35\% to 35.48\%.

\section{Training Details}
\label{app:training}

\subsection{Hyperparameters}

\begin{table}[h]
\centering

\small
\begin{tabular}{lcc}
\toprule
\textbf{Parameter} & \textbf{HIP} & \textbf{Triton} \\
\midrule
\multicolumn{3}{l}{\textit{Model}} \\
Base model & \multicolumn{2}{c}{Qwen3-8B-Instruct} \\
Sequence length & 8K & 16K \\
\midrule
\multicolumn{3}{l}{\textit{SFT}} \\
Hardware & 8$\times$MI250 & 32$\times$MI325X \\
Learning rate & $2 \times 10^{-5}$ & $1 \times 10^{-5}$ \\
Batch size & 8/GPU & 8/GPU \\
Epochs & 4 & 2 \\
Framework & LLaMA-Factory & LLaMA-Factory \\
\midrule
\multicolumn{3}{l}{\textit{RL: GRPO / Multi-Turn GRPO}} \\
Hardware & 8$\times$MI325X & 32$\times$MI325X \\
Learning rate & $1 \times 10^{-6}$ & $1 \times 10^{-6}$ \\
Batch size & 12 & 64 prompts \\
Group size $G$ & 6 & 8 \\
Clip $\epsilon$ & 0.2 & 0.2 / 0.28 \\
KL coefficient $\beta$ & 0.01 & 0.01 \\
Framework & VeRL & SLIME \\
Epochs & 2 & 2.7 \\
\midrule
\multicolumn{3}{l}{\textit{Multi-Turn (Triton only)}} \\
Turns $T$ & -- & 3 \\
Discount $\gamma$ & -- & 0.6 \\
\midrule
\multicolumn{3}{l}{\textit{Reward}} \\
Compile partial & -- & 0.4 \\
Correct & -- & 1.0 \\
Performance cap & -- & 1.5 \\
\bottomrule
\end{tabular}
\caption{Complete hyperparameter configuration.}
\end{table}

\subsection{Implementation Components}

Our implementation separates training, execution feedback, and benchmark scoring into modular components. The multi-turn RL trainer follows a SLIME-style rollout loop: prompts are sampled from the kernel task dataset, the policy generates candidate code and reflections for up to $T=3$ turns, and GRPO updates the same policy using the hierarchical rewards described in Section~\ref{sec:training}. Reward-standard-deviation normalization is disabled to preserve stable advantages in the sparse-reward regime. A separate execution server compiles candidates, runs correctness tests, and measures latency, returning compiler/runtime traces and performance feedback to the rollout loop. Final benchmark scoring uses TritonBench-G and ROCmBench, while the headline Triton results use the fixed agent-training evaluator (GEAK OptimAgent-v2) with \texttt{num\_offsprings=1}, \texttt{max\_iteration=3}, and no profiler feedback.

\subsection{Curriculum Learning Phases (Triton)}

For Triton RL training, we employ difficulty fading across three phases:

\begin{table}[h]
\centering
\small

\begin{tabular}{clr}
\toprule
\textbf{Phase} & \textbf{Composition} & \textbf{Samples} \\
\midrule
1 & L1 only (Easy) & 5.3K \\
2 & L1$_{\text{subset}}$ + L2 + L3$_{\text{med}}$ & 16.5K \\
3 & L2$_{\text{subset}}$ + L3 + variants & 4K \\
\bottomrule
\end{tabular}
\caption{Curriculum phases for Triton RL training.}
\label{tab:curriculum}
\end{table}

Unlike cumulative curricula that add harder samples while keeping all easy samples, difficulty fading progressively reduces easy samples to focus training on harder cases. This approach prevents the model from overfitting to easy examples while ensuring sufficient exposure to challenging multi-file and fused kernel implementations.

\section{Training Dynamics}
\label{app:dynamics}

Figure~\ref{fig:training} presents the training dynamics for our multi-turn on-policy reflection RL on Triton kernel generation. The reward trajectory (a) shows steady improvement from the SFT baseline ($\sim$0.19) to peak values above 1.0, demonstrating effective learning from execution feedback. The benchmark evaluation (b) tracks performance on TritonBench-G (compilation and correctness accuracy) and ROCmBench throughout training, showing consistent gains across both benchmarks.

\begin{figure*}[h!]
\centering
\includegraphics[width=0.8\linewidth]{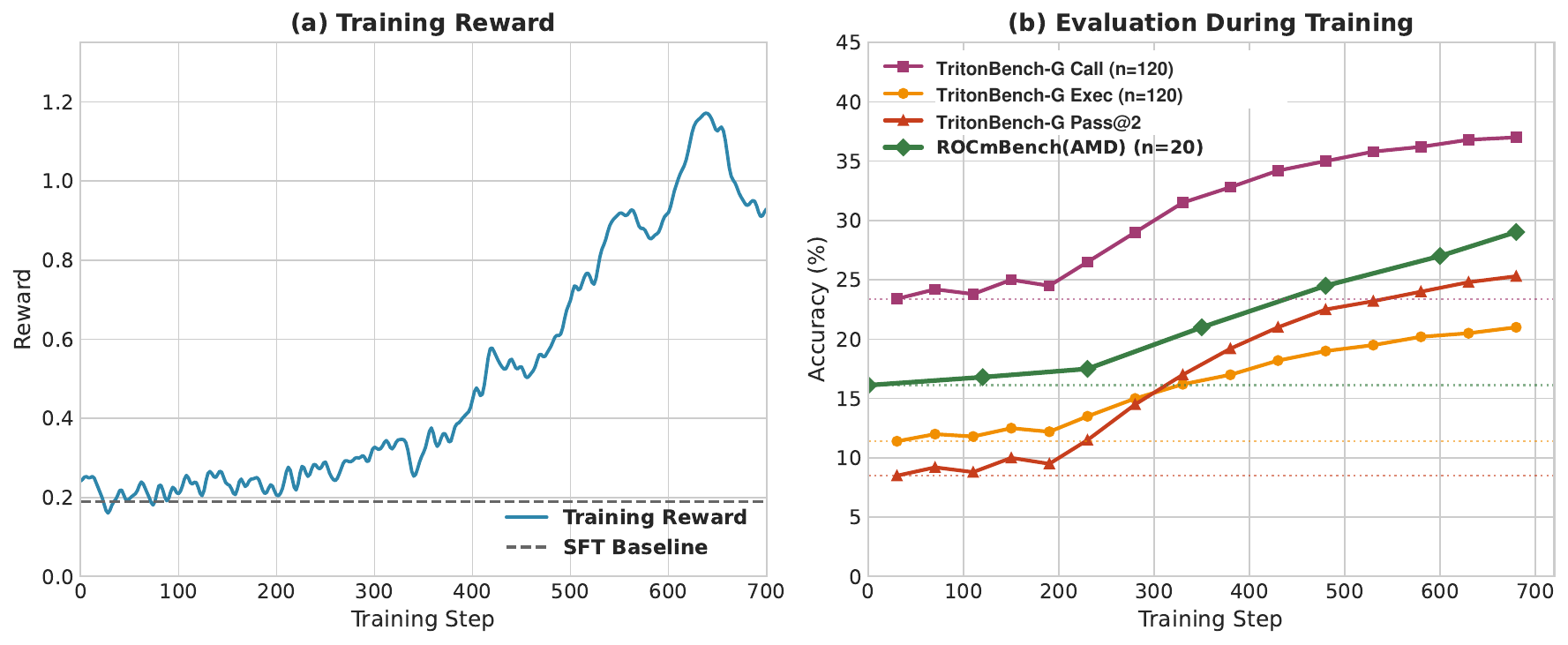}
\caption{RL training results for Triton kernel generation. (a) Reward trajectory showing improvement from SFT baseline ($\sim$0.19) to peak values above 1.0. (b) Benchmark evaluation during training.}
\label{fig:training}
\end{figure*}

\section{Additional Experimental Analysis}
\label{app:additional_experiments}

\paragraph{Cost and deployability.}
A practical advantage of training one compact policy is that the GEAK-style loop can run locally on AMD GPUs. Frontier-LLM-based pipelines require repeated external calls for generation, reflection, and optimization, introducing API cost, latency variance, and deployment constraints. In contrast, our trained Qwen3-8B model runs within the local AMD environment and serves all three roles in the agent loop.

\paragraph{Cross-hardware transfer.}
To test whether learned optimizations overfit to one AMD target, we evaluate HIP kernels optimized on MI325 GPUs on MI250 GPUs. The optimized kernels retain substantial gains, reaching Speed@10 of 1.68$\times$ on MI250. Triton evaluations on MI325 and MI350 show similar qualitative trends, suggesting that the model learns transferable optimization principles such as locality, tiling, and parallel reduction patterns while still benefiting from ROCm-specific validation.

\paragraph{Additional failure modes.}
Beyond D5 expert-level failures, we observe three common categories of errors: shape-specialized indexing that fails on broader test cases, numerically unstable reductions, and inefficient memory layouts inherited from CUDA/NVIDIA-centric assumptions. These findings motivate future extensions with profiler-guided training feedback and additional production AMD kernels.

\section{Ablation Studies}
\label{app:ablation}

We conduct comprehensive ablation studies to validate our design choices across both HIP and Triton training pipelines.

\subsection{Data Scale (HIP)}

Table~\ref{tab:ab_data_scale} evaluates the impact of training data scale on PyTorch-to-HIP translation (200 tasks from KernelBench Levels 1 and 2).

\begin{table}[h]
\centering

\small
\begin{tabular}{c|cc}
\toprule
\textbf{Data Rate} & \textbf{Comp Acc} & \textbf{Corr Acc} \\
\midrule
0\% (base) & 50.0\% & 14.5\% \\
10\% & 62.5\% & 17.5\% \\
30\% & 64.5\% & 18.0\% \\
50\% & 69.5\% & 23.0\% \\
\rowcolor{green!10} 100\% & \textbf{83.5\%} & \textbf{34.0\%} \\
\bottomrule
\end{tabular}
\caption{Impact of training data scale on PyTorch-to-HIP translation.}
\label{tab:ab_data_scale}
\end{table}

With only 10\% of training data, the model achieves 62.5\% compilation and 17.5\% correctness—already surpassing the base model (50.0\%/14.5\%). Performance scales consistently, confirming our dataset provides high-quality, diverse supervision.

\subsection{Triton RL Components}

Table~\ref{tab:ablation} presents ablations on the Triton training framework, evaluated using the agent-training evaluator (GEAK OptimAgent-v2) with 3 iterations on TritonBench-G (184 kernels).

\begin{table}[h]
\centering

\small
\begin{tabular}{lcc}
\toprule
\textbf{Configuration} & \textbf{Corr Acc} & \textbf{$\Delta$} \\
\midrule
Full system & 33.2\% & -- \\
\midrule
Single-turn ($T=1$) & 20.1\% & -13.1 \\
No SFT cold start & 8.5\% & -24.7 \\
\bottomrule
\end{tabular}
\caption{Ablation studies on Triton generation.}
\label{tab:ablation}
\end{table}

Key findings:
\begin{itemize}[leftmargin=*,itemsep=2pt]
\item \textbf{Multi-turn is essential} (+13.1\%): Single-turn generation cannot recover from initial failures, while 3-turn reflection enables progressive refinement. The model learns to identify errors from execution feedback and generate targeted fixes.
\item \textbf{SFT cold start is critical} (+24.7\%): Without SFT initialization, the base model's low initial success rate ($\sim$15\%) produces extremely sparse RL gradients. SFT raises this to $\sim$51\%, enabling effective RL optimization.
\end{itemize}

\subsection{Fixed Agent Scaffold Ablation}

Table~\ref{tab:fixed_agent_scaffold} isolates the effect of the model training stage under a fixed agent-training evaluator. All rows use the same GEAK OptimAgent-v2 loop on TritonBench-G with \texttt{num\_offsprings=1}, \texttt{max\_iteration=3}, no profiler feedback, and identical generator/reflector/optimizer role assignment; only the model checkpoint changes. The large gain from base Q3-8B to Q3-8B+SFT+Agentic RL shows that the external agent scaffold alone is insufficient: the policy must be trained on agent feedback trajectories to use compiler, correctness, and performance feedback effectively.

\begin{table}[h]
\centering

\small
\setlength{\tabcolsep}{4pt}
\renewcommand{\arraystretch}{1.12}
\begin{tabularx}{\columnwidth}{@{}Y Y c@{}}
\toprule
\textbf{Model inside agent evaluator} & \textbf{Training signal} & \textbf{Corr@3} \\
\midrule
Q3-8B (base) & None / role-agnostic base model & 4.9\% \\
Q3-8B + SFT & Supervised kernel data & 10.3\% \\
\rowcolor{green!10} Q3-8B + SFT + Agentic RL & Multi-turn compiler/test/perf feedback & \textbf{33.2\%} \\
\bottomrule
\end{tabularx}
\caption{Effect of agentic training under a fixed agent scaffold. All rows use the same GEAK OptimAgent-v2 loop; only the model checkpoint changes.}
\label{tab:fixed_agent_scaffold}
\end{table}

\subsection{Reward Sensitivity}

We further analyze the components of the hierarchical reward used during Triton RL. The dominant sensitivity is not the absolute reward scale but whether the reward creates usable gradient signal in the low-success regime.

\begin{table*}[t]
\centering

\small
\setlength{\tabcolsep}{5pt}
\renewcommand{\arraystretch}{1.12}
\begin{tabularx}{\textwidth}{@{}l l Y@{}}
\toprule
\textbf{Factor} & \textbf{Sensitivity} & \textbf{Observed effect} \\
\midrule
Compile partial credit & High & Raises usable gradient mass from near-zero to $\sim$44\% \\
Disable $\sigma$-normalization & Critical & Avoids advantage collapse (Adv. $\approx 0.5$ vs. $\sim10^{-8}$) \\
Credit value 0.2 vs. 0.4 & Low--medium & Both learn; 0.4 gives denser early compile signal \\
Performance cap & Low & Stabilizes late training against noisy $>$15$\times$ speedups \\
\bottomrule
\end{tabularx}
\caption{Qualitative sensitivity of reward design choices in sparse Triton RL.}
\label{tab:reward_sensitivity}
\end{table*}

These observations motivate the chosen reward in Eq.~\ref{eq:reward}: partial compilation credit provides early learning signal, disabled normalization prevents variance collapse or explosion, and the bounded log-squared performance term rewards real speedups without letting measurement jitter dominate correctness.

\section{Extended Results}
\label{app:results}

\subsection{Pass@K Analysis (Parallel Scaling)}

Table~\ref{tab:passk} presents pass@k results on ROCmBench (31 Triton kernels), measuring the probability of generating at least one correct kernel within $k$ samples. Our model achieves the highest pass@k across all values, with pass@5 reaching 41.94\% compared to 38.71\% for Claude Sonnet 4.

\begin{table*}[h]
\centering

\small
\begin{tabular}{lccccc}
\toprule
\textbf{Model} & \textbf{pass@1} & \textbf{pass@2} & \textbf{pass@3} & \textbf{pass@4} & \textbf{pass@5} \\
\midrule
\rowcolor{green!10} \textbf{Ours} & \textbf{27.10\%} & \textbf{35.16\%} & \textbf{38.71\%} & \textbf{40.65\%} & \textbf{41.94\%} \\
Claude Sonnet 4 & 25.81\% & 33.87\% & 36.77\% & 38.06\% & 38.71\% \\
GPT-5 & 20.65\% & 28.39\% & 31.29\% & 32.26\% & 32.26\% \\
Gemini 2.5 Pro & 17.42\% & 24.84\% & 29.03\% & 32.26\% & 35.48\% \\
\bottomrule
\end{tabular}
\caption{Pass@K results on ROCmBench (31 kernels, 5 samples each with temp=1.0)}
\label{tab:passk}
\end{table*}

\begin{figure*}[h]
\centering
\includegraphics[width=0.7\linewidth]{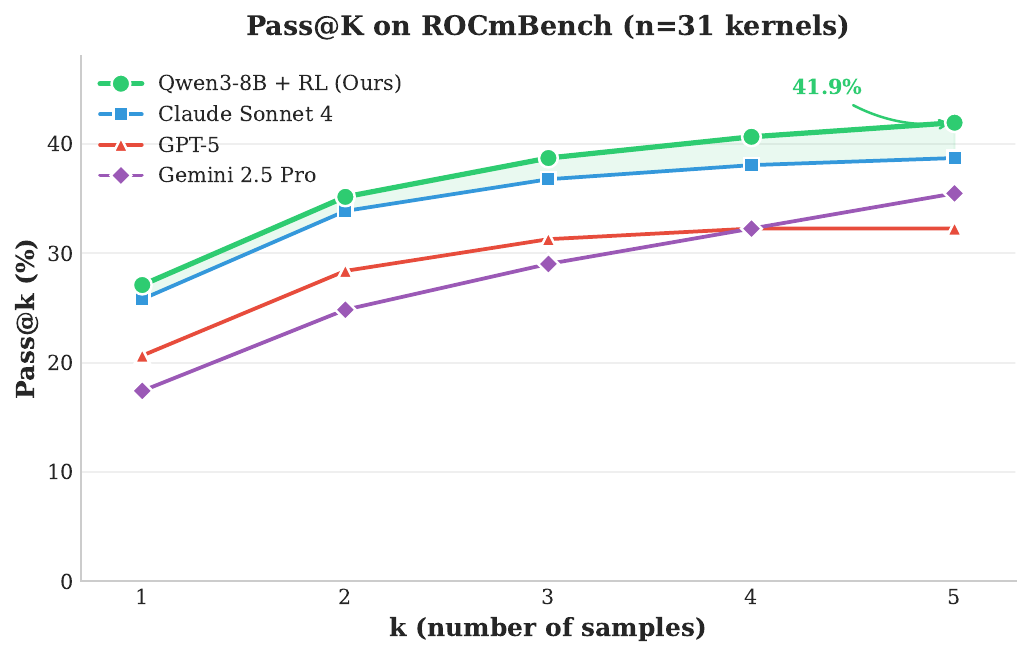}
\caption{Pass@K comparison on ROCmBench (31 kernels). Our model achieves the highest pass rate at all $k$ values, demonstrating superior sample efficiency for kernel generation.}
\label{fig:passk}
\end{figure*}

\subsection{Full Difficulty Breakdown}

Table~\ref{tab:difficulty_full} provides the complete difficulty-wise breakdown including all model variants, showing the progression from base model through SFT to RL.

\begin{table*}[h]
\centering

\small
\begin{tabular}{lcccccc}
\toprule
\textbf{Difficulty} & \textbf{Claude} & \textbf{GPT-5} & \textbf{Gemini} & \textbf{Base} & \textbf{SFT} & \textbf{Ours} \\
\midrule
D1 (Easy, 3)      & 1/3 & 0/3 & 2/3 & 2/3 & 3/3 & \textbf{3/3} \\
D2 (Med, 27)     & 12/27 & 7/27 & 9/27 & 7/27 & 10/27 & \textbf{17/27} \\
D3 (Hard, 65)    & 24/65 & 16/65 & 21/65 & 0/65 & 6/65 & \textbf{28/65} \\
D4 (V.Hard, 84)  & \textbf{18/84} & 5/84 & 11/84 & 0/84 & 0/84 & 13/84 \\
D5 (Expert, 5)   & 0/5 & 0/5 & 0/5 & 0/5 & 0/5 & 0/5 \\
\midrule
\textbf{Total}   & 55 & 28 & 43 & 9 & 19 & \textbf{61} \\
\bottomrule
\end{tabular}
\caption{Complete difficulty-wise results (exec-correct kernels at final iteration).
Base = Qwen3-8B, SFT = Cold-Start SFT, Ours = SFT $\rightarrow$ Multi-Turn RL.}
\label{tab:difficulty_full}
\end{table*}

\subsection{Training Progression}

The progression from Base (4.9\%) to SFT (10.3\%) to RL (33.2\%) demonstrates the effectiveness of each training stage:
\begin{itemize}[leftmargin=*,itemsep=0pt]
\item \textbf{Base $\rightarrow$ SFT}: +5.4\% (2.1$\times$) — learns kernel syntax and patterns
\item \textbf{SFT $\rightarrow$ RL}: +22.9\% (3.2$\times$) — learns from execution feedback
\end{itemize}

\section{Theoretical Analysis}
\label{app:proofs}

\subsection{Zero-Advantage Collapse in GRPO}

Standard GRPO uses binary rewards (correct/incorrect) with group-normalized advantages. We first prove why this leads to frequent zero-gradient updates, then explain how our design avoids this failure mode.

\subsubsection{Binary Reward Collapse (Motivation)}

\begin{theorem}[Binary Reward Collapse]
\label{thm:collapse}
In standard GRPO with group size $G$ and binary rewards $r \in \{0, 1\}$, let $p_e$ denote the probability that a single sample succeeds (receives reward 1). The probability that all samples in a group receive identical rewards, yielding zero advantages, is:
\begin{equation}
P(\hat{A}_i = 0, \forall i) = p_e^G + (1-p_e)^G
\end{equation}
\end{theorem}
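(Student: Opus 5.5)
The plan is to reduce the event $\{\hat{A}_i = 0\ \forall i\}$ to the event that all $G$ rewards in the group coincide. Then I will compute that probability under the implicit modeling assumption that the $G$ rollouts are i.i.d.\ Bernoulli$(p_e)$ given the prompt. I will state this independence assumption explicitly, since the theorem needs it: rollouts are sampled independently from $\pi_\theta(\cdot\mid x)$ for a fixed prompt $x$.

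\textbf{Step 1: characterize zero advantage.} With the standard normalized advantage $\hat{A}_i = (r_i - \bar{r})/(\sigma_r + \epsilon)$, where $\epsilon>0$, the denominator is strictly positive. Hence $\hat{A}_i = 0$ for all $i$ iff $r_i = \bar{r}$ for all $i$. I will show this holds iff $r_1 = \cdots = r_G$.
\begin{itemize}
\item If all rewards are equal to some $c$, then $\bar{r} = c$, so every numerator vanishes.
\item Conversely, if $r_i = \bar{r}$ for every $i$, then all $r_i$ equal the common value $\bar{r}$.
\end{itemize}
Since $r_i \in \{0,1\}$, the event $\{r_1=\cdots=r_G\}$ is the disjoint union of $\{r_i = 1\ \forall i\}$ and $\{r_i = 0\ \forall i\}$. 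The two are disjoint because $G \ge 1$.

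\textbf{Step 2: compute the probability.}
\begin{itemize}
\item By independence, $P(r_i = 1\ \forall i) = \prod_{i=1}^G p_e = p_e^G$.
\item Likewise, $P(r_i = 0\ \forall i) = (1-p_e)^G$.
\end{itemize}
Adding the two disjoint events gives $p_e^G + (1-p_e)^G$, which is the claimed identity.

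\textbf{Main obstacle and a remark.} The only delicate point is the modeling assumption, not the algebra. The theorem is really about a fixed prompt, with samples drawn independently and a common success probability $p_e$. If $p_e$ varies across prompts, the formula holds conditionally on the prompt, and averaging over prompts gives $\mathbb{E}[p_e^G + (1-p_e)^G]$.

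I will also note the edge case $\epsilon = 0$. There $\sigma_r = 0$ exactly on this event, so the ratio is $0/0$. Implementations then treat the advantage as zero, or equivalently the gradient vanishes, so the same event characterizes the collapse.

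Finally, I will observe the consequence that motivates the paper's design. For small $p_e$ the collapse probability is close to $1$; for instance $(1-p_e)^G \approx 1 - Gp_e$. This is why the compile partial credit and the unnormalized advantage in Eq.~\ref{eq:advantage} are introduced.
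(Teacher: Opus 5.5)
Your proof is correct and follows the same route as the paper: you reduce $\{\hat{A}_i = 0\ \forall i\}$ to the event that all $G$ binary rewards coincide (all $0$ or all $1$), then use independence of the samples to obtain $(1-p_e)^G + p_e^G$. Your version is somewhat more careful than the paper's, since you prove both directions of the equivalence using $\sigma_r+\epsilon>0$ and state the conditional i.i.d.\ assumption explicitly, whereas the paper checks only the ``if'' direction and invokes independence implicitly.
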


\begin{proof}
In standard GRPO, the advantage for sample $i$ is:
\begin{equation}
\hat{A}_i = \frac{r_i - \bar{r}}{\sigma_r + \epsilon}
\end{equation}
where $\bar{r} = \frac{1}{G}\sum_{j=1}^G r_j$ is the group mean and $\sigma_r$ is the group standard deviation.

For binary rewards $r_i \in \{0, 1\}$, the advantage $\hat{A}_i = 0$ for all $i$ if and only if all samples receive the same reward:
\begin{itemize}
\item If all $r_i = 0$: then $\bar{r} = 0$, so $r_i - \bar{r} = 0$ for all $i$
\item If all $r_i = 1$: then $\bar{r} = 1$, so $r_i - \bar{r} = 0$ for all $i$
\end{itemize}

Since samples are drawn independently:
\begin{align}
P(\text{all fail}) &= (1-p_e)^G \\
P(\text{all pass}) &= p_e^G \\
P(\text{zero advantage}) &= p_e^G + (1-p_e)^G
\end{align}
\end{proof}

\begin{corollary}[Collapse Rate with Binary Rewards]
For typical kernel generation with $\sim$6\% initial success rate ($p_e = 0.06$) and group size $G = 8$:
\begin{align}
P(\text{zero advantage}) &= (0.06)^8 + (0.94)^8 \\
&\approx 0 + 0.606 = 60.6\%
\end{align}
Over 60\% of training batches produce zero gradients—the model cannot learn.
\end{corollary}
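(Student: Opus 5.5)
The proof has two parts: identify exactly when every advantage in a group vanishes, then compute the probability of that event under independent sampling. Throughout, the $G$ rollouts in a group are taken to be i.i.d. Bernoulli$(p_e)$ rewards given the prompt, which is the implicit modeling assumption of Theorem~\ref{thm:collapse}.

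\textbf{Characterizing the event.} With $\bar r = \frac1G\sum_j r_j$ and $\epsilon>0$, the denominator $\sigma_r+\epsilon$ is strictly positive. Hence $\hat A_i = 0$ holds if and only if $r_i = \bar r$. Requiring this for every $i$ means all $r_i$ equal their common mean, so all rewards coincide. Conversely, if all rewards are equal, then each numerator $r_i-\bar r$ is zero and every advantage vanishes. Because $r_i\in\{0,1\}$, the event $\{\hat A_i=0\ \forall i\}$ is exactly the disjoint union of the events ``all $r_i=0$'' and ``all $r_i=1$''. Disjointness holds for $G\ge 1$.

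\textbf{Computing the probability.} By independence, $P(\text{all }r_i=1)=p_e^G$ and $P(\text{all }r_i=0)=(1-p_e)^G$. Since the two events are disjoint, their probabilities add, which gives $p_e^G+(1-p_e)^G$.

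\textbf{Main subtlety.} The only delicate step is the ``only if'' direction of the characterization. We must ensure that no mixed-reward group yields zero advantages. This holds because, in a mixed group, $\bar r\in(0,1)$ strictly, so every sample has $r_i\neq\bar r$. The positivity of $\epsilon$ guarantees the ratio is well defined even when $\sigma_r=0$.

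The corollary then follows by substituting $p_e=0.06$ and $G=8$. This gives $(0.94)^8\approx 0.610$, plus a negligible term of about $1.7\times10^{-10}$. The collapse probability is therefore roughly $61\%$, slightly above the $60.6\%$ quoted in the statement.
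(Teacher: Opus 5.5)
Your proposal is correct and follows the paper's route. You characterize the zero-advantage event as ``all rewards equal'' (and you also check the ``only if'' direction, which the paper leaves implicit), use independence to obtain $p_e^G+(1-p_e)^G$, and substitute $p_e=0.06$, $G=8$. Your arithmetic is right: $(0.94)^8\approx 0.6096$, so the paper's $0.606$ is a minor numerical slip that does not affect the ``over 60\%'' conclusion.
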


\subsubsection{Our Solution: Hierarchical Rewards + Disabled Normalization}

To avoid binary-reward collapse, we combine two mechanisms:

\paragraph{1. Hierarchical Rewards.}
Instead of binary $r \in \{0, 1\}$, we use:
\begin{equation}
r = \underbrace{r^{\text{compile}}}_{0.4} + \underbrace{r^{\text{correct}}}_{1.0} + \underbrace{r^{\text{perf}}}_{\leq 1.5} \in [0, 2.9]
\end{equation}

This creates \textbf{intermediate reward levels}: a sample that compiles but fails correctness receives $r = 0.4$, distinguishing it from complete failures ($r = 0$). This makes identical rewards across a group less likely.

\paragraph{2. Disabled $\sigma$-Normalization.}
Even with hierarchical rewards, early training can exhibit reward clustering (e.g., most samples get $r = 0$ or $r = 0.4$). When $\sigma_r$ is small but non-zero, the standard normalization $\hat{A}_i = (r_i - \bar{r})/\sigma_r$ amplifies small differences into large, unstable gradients.

During training, we observed training collapse due to these unstable gradients when using standard normalization. Disabling normalization entirely resolved this:
\begin{equation}
\hat{A}_i = r_i - \bar{r}
\end{equation}

The combination ensures: (1) hierarchical rewards create variance by distinguishing partial successes, and (2) disabled normalization keeps gradients stable when that variance is small.

\paragraph{Curriculum-Induced Clustering.}
During curriculum training, batches contain samples of similar difficulty (e.g., all L1-Easy). Even successful samples receive similar rewards ($\approx$1.4), causing $\sigma_r \approx 0$. Disabled normalization handles this gracefully—any differential progress (one sample achieving performance bonus while others don't) produces non-zero gradients.

\subsection{On-Policy Reflection}

At each turn, the policy conditions on the prompt, prior code and reflections, and the compiler or execution feedback returned by the environment. The same parameters generate subsequent revisions and are updated from the resulting trajectory return. We use this standard on-policy construction as an implementation choice; we do not claim a new policy-gradient result.

\subsection{Multi-Turn Credit Assignment}

\begin{lemma}[Temporal Credit with $\gamma < 1$]
For discount factor $\gamma = 0.6$ and $T = 3$ turns, the effective contribution weights are:
\begin{align}
w_1 &= 1.0 \quad \text{(Turn 1)} \\
w_2 &= 0.6 \quad \text{(Turn 2)} \\
w_3 &= 0.36 \quad \text{(Turn 3)}
\end{align}
After normalization, this assigns approximately 51.0\% of the weight to Turn 1, encouraging the model to succeed early while still rewarding improvement through reflection.
\end{lemma}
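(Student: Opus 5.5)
The plan is to read the weights directly off the multi-turn return in Eq.~\eqref{eq:multiturn_return}, $G(\tau)=\sum_{t=1}^{T}\gamma^{t-1}r_t$. I will define the ``effective contribution weight'' of turn $t$ as the coefficient multiplying the per-turn reward $r_t$ in $G(\tau)$, that is $w_t=\gamma^{t-1}$. The lemma is then a statement about the coefficients of a finite geometric sum, and no probabilistic argument is needed.

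The steps, in order, are as follows.
\begin{enumerate}
\item Substitute $\gamma=0.6$ and $T=3$ into $w_t=\gamma^{t-1}$. This gives $w_1=\gamma^0=1$, $w_2=0.6$ and $w_3=0.6^2=0.36$, which are the three displayed values.
\item Normalize by the total weight. Summing the geometric series gives
\[
W=\sum_{t=1}^{3}\gamma^{t-1}=\frac{1-\gamma^3}{1-\gamma}=\frac{1-0.216}{0.4}=1.96 .
\]
\item Compute the normalized share of Turn 1:
\[
w_1/W=1/1.96\approx 0.5102 ,
\]
which is the stated $\approx 51.0\%$. For completeness I will also note $w_2/W\approx 30.6\%$ and $w_3/W\approx 18.4\%$.
\end{enumerate}

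For the qualitative claim, I will argue from strict monotonicity: since $0<\gamma<1$, the sequence $w_t$ is strictly decreasing, so an identical reward earned at an earlier turn contributes more to $G(\tau)$. Because $\hat{A}_i=G(\tau_i)-\bar G$ in Eq.~\eqref{eq:advantage}, this ordering carries over to the advantage, so reaching success at turn 1 is favored. At the same time $w_2,w_3>0$, so a reward first obtained after reflection still raises the return, and improvement through reflection remains rewarded.

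The main obstacle is not the arithmetic but the interpretation of ``contribution.'' A turn-1 action also influences $r_2$ and $r_3$ through the conditioning on feedback, so the true causal credit of a turn is not simply $w_t$. I will make this explicit by stating that the lemma concerns the weight placed on each turn's reward in the trajectory return, not a decomposition of the policy gradient by action. This keeps the claim exact and consistent with the paper's disclaimer that no new policy-gradient result is asserted.
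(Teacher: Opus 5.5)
Your proposal is correct and matches the paper's argument. The paper gives no separate proof beyond the direct computation: the weights are the coefficients $\gamma^{t-1}$ of the discounted return $G(\tau)$, and normalizing by $1+0.6+0.36=1.96$ gives $1/1.96\approx 51.0\%$ for Turn 1. Your monotonicity argument for the qualitative claim, and your caveat that ``contribution'' means the weight on each turn's reward rather than causal policy-gradient credit, are sound additions rather than required steps.
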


\section{Multi-Turn Self-Correction Examples}
\label{app:examples}

We present abridged pseudocode examples illustrating self-correction on \textbf{medium-to-hard difficulty} kernels. Ellipses and shortened fragments omit boilerplate and therefore are not standalone executable listings.

\subsection{Example 1: Cross-Attention (Medium-Hard)}

\textbf{Task:} Implement cross-attention softmax kernel for RGB-Depth fusion.

\textbf{Concepts:} Multi-head attention, softmax fusion, batched matrix operations, numerical stability.

\paragraph{Turn 1} (Compile=\cmark, Exec=\xmark, Reward=0.4): Missing numerical stability (max subtraction):
\begin{lstlisting}
@triton.jit
def softmax_kernel(input_ptr, output_ptr, n_rows, n_cols,
                   BLOCK_SIZE: tl.constexpr):
    row_idx = tl.program_id(0)
    col_offsets = tl.arange(0, BLOCK_SIZE)
    vals = tl.load(input_ptr + row_idx * n_cols + col_offsets)
    # BUG: No numerical stability - causes overflow!
    output = tl.exp(vals) / tl.sum(tl.exp(vals))
    tl.store(output_ptr + row_idx * n_cols + col_offsets, output)
\end{lstlisting}

\textbf{Feedback:} ``NaN values in output. Softmax requires numerical stability: subtract max before exp() to prevent overflow.''

\paragraph{Turn 2} (Compile=\cmark, Exec=\cmark, Reward=1.87, Speedup=2.17$\times$): Correct numerically stable softmax:
\begin{lstlisting}
@triton.jit
def softmax_kernel(input_ptr, output_ptr, n_rows, n_cols,
                   input_row_stride, BLOCK_SIZE: tl.constexpr):
    row_idx = tl.program_id(0)
    row_start = input_ptr + row_idx * input_row_stride
    col_offsets = tl.arange(0, BLOCK_SIZE)
    mask = col_offsets < n_cols
    vals = tl.load(row_start + col_offsets, mask=mask, other=-inf)
    # FIXED: Numerical stability
    row_max = tl.max(vals, axis=0)
    vals_shifted = vals - row_max
    numerator = tl.exp(vals_shifted)
    output = numerator / tl.sum(numerator, axis=0)
    tl.store(output_ptr + row_idx * n_cols + col_offsets, output, mask=mask)
\end{lstlisting}

\textbf{Recovery:} Immediate (1 turn). Final speedup: \textbf{2.17$\times$}.

\subsection{Example 2: PreNorm Transformer Decoder (Hard)}

\textbf{Task:} Implement Triton-optimized Pre-Norm Transformer Decoder Layer.

\textbf{Concepts:} Fused layer normalization, multi-head attention, cross-attention, FFN, residual connections.

\paragraph{Turn 1} (Compile=\xmark, Exec=\xmark, Reward=0.0): Incorrect kernel structure for layer norm:
\begin{lstlisting}
@triton.jit
def layer_norm_kernel(x_ptr, weight_ptr, output_ptr, M, N):
    # BUG: Missing eps, wrong reduction, no bias
    row = tl.program_id(0)
    mean = tl.sum(tl.load(x_ptr + row * N)) / N  # Wrong load!
    var = tl.sum((x - mean) ** 2) / N
    output = (x - mean) / tl.sqrt(var)  # Missing eps!
\end{lstlisting}

\textbf{Feedback:} ``Compilation failed: undefined 'x'. Also: missing eps causes division by zero, missing bias parameter.''

\paragraph{Turn 2} (Compile=\cmark, Exec=\xmark, Reward=0.4): Fixed syntax but wrong reduction for large N:
\begin{lstlisting}
@triton.jit
def layer_norm_kernel(x_ptr, weight_ptr, bias_ptr, output_ptr,
                      M, N, eps: tl.constexpr, BLOCK_SIZE: tl.constexpr):
    row = tl.program_id(0)
    cols = tl.arange(0, BLOCK_SIZE)
    # BUG: Only handles N <= BLOCK_SIZE
    x = tl.load(x_ptr + row * N + cols, mask=cols < N)
    mean = tl.sum(x) / N
    var = tl.sum((x - mean) ** 2) / N
    ...
\end{lstlisting}

\textbf{Feedback:} ``Incorrect output for N > BLOCK\_SIZE. Use multi-pass reduction for large feature dimensions.''

\paragraph{Turn 3} (Compile=\cmark, Exec=\cmark, Reward=2.9, Speedup=9.13$\times$): Correct multi-pass layer norm:
\begin{lstlisting}
@triton.jit
def layer_norm_kernel(x_ptr, weight_ptr, bias_ptr, output_ptr,
                      M, N, eps: tl.constexpr, BLOCK_SIZE: tl.constexpr):
    row = tl.program_id(0)
    # FIXED: Multi-pass for arbitrary N
    mean = 0.0
    for off in range(0, N, BLOCK_SIZE):
        cols = off + tl.arange(0, BLOCK_SIZE)
        mask = cols < N
        x = tl.load(x_ptr + row * N + cols, mask=mask, other=0.0)
        mean += tl.sum(x, axis=0)
    mean = mean / N
    # Similar multi-pass for variance...
\end{lstlisting}

\textbf{Recovery:} Progressive (2 turns). Final speedup: \textbf{9.13$\times$}.

\subsection{Example 3: Bound Softmax Multi-Pass (Medium)}

\textbf{Task:} Implement numerically stable softmax for sequences longer than block size.

\textbf{Concepts:} 3-pass algorithm (max, exp+sum, normalize), arbitrary sequence length handling.

\paragraph{Turn 1} (Compile=\cmark, Exec=\xmark, Reward=0.4): Single-pass fails for long sequences:
\begin{lstlisting}
@triton.jit
def softmax_kernel(input_ptr, output_ptr, n_rows, n_cols,
                   BLOCK_SIZE: tl.constexpr):
    row_idx = tl.program_id(0)
    # BUG: Assumes n_cols <= BLOCK_SIZE
    cols = tl.arange(0, BLOCK_SIZE)
    vals = tl.load(input_ptr + row_idx * n_cols + cols)
    max_val = tl.max(vals)  # Only partial max!
    exp_vals = tl.exp(vals - max_val)
    output = exp_vals / tl.sum(exp_vals)
\end{lstlisting}

\textbf{Feedback:} ``Incorrect softmax for n\_cols=2048 > BLOCK\_SIZE=1024. Partial max causes incorrect normalization.''

\paragraph{Turn 2} (Compile=\cmark, Exec=\cmark, Reward=2.9, Speedup=5.18$\times$): Correct 3-pass algorithm:
\begin{lstlisting}
@triton.jit
def bound_softmax_kernel(input_ptr, output_ptr, n_rows, n_cols,
                         BLOCK_SIZE: tl.constexpr):
    row_idx = tl.program_id(0)
    row_start = input_ptr + row_idx * n_cols
    # PASS 1: Find global maximum
    max_val = float('-inf')
    for col_off in range(0, n_cols, BLOCK_SIZE):
        cols = col_off + tl.arange(0, BLOCK_SIZE)
        mask = cols < n_cols
        vals = tl.load(row_start + cols, mask=mask, other=float('-inf'))
        max_val = tl.maximum(max_val, tl.max(vals, axis=0))
    # PASS 2: Compute exp(x-max) and sum
    sum_val = 0.0
    for col_off in range(0, n_cols, BLOCK_SIZE):
        ...  # Store intermediate, accumulate sum
    # PASS 3: Normalize by sum
    for col_off in range(0, n_cols, BLOCK_SIZE):
        ...  # Divide by sum_val
\end{lstlisting}

\textbf{Recovery:} Immediate (1 turn). Final speedup: \textbf{5.18$\times$}.

\subsection{Summary: Recovery Statistics}

\begin{table}[t]
\centering
\small
\setlength{\tabcolsep}{4pt}
\renewcommand{\arraystretch}{1.08}
\begin{tabular}{llcccc}
\toprule
\textbf{Kernel} & \textbf{Diff.} & \textbf{T1} & \textbf{T2} & \textbf{T3} & \textbf{Speedup} \\
\midrule
Cross-Attn. Softmax & Med.-Hard & \xmark & \cmark & \cmark & 2.17$\times$ \\
PreNorm Trans. & Hard & \xmark & \xmark & \cmark & 9.13$\times$ \\
Bound Softmax & Medium & \xmark & \cmark & \cmark & 5.18$\times$ \\
Fused Upsample & Hard & \xmark & \xmark & \cmark & 6.79$\times$ \\
Weight Proc. & Hard & \xmark & \xmark & \cmark & 4.22$\times$ \\
\bottomrule
\end{tabular}
\caption{Multi-turn recovery on medium-to-hard kernels with final speedups. \cmark{} = compile and execution pass; \xmark{} = fail.}
\label{tab:recovery_stats}
\end{table}

These examples demonstrate that RL training enables \textbf{emergent debugging capabilities} on complex GPU kernels: the model learns to handle multi-pass reductions, numerical stability, arbitrary tensor dimensions, and fused operations—patterns that require deep understanding of GPU programming paradigms.

\subsection{Kernel Quality Analysis}
\label{app:kernel_quality}

Across the five medium-to-hard case studies summarized above, the agent-trained model produces correct kernels with speedups of $2.17\times$, $9.13\times$, $5.18\times$, $6.79\times$, and $4.22\times$ over the PyTorch reference (geometric mean $\approx 4.94\times$, median $5.18\times$). Two qualitative observations emerge from inspecting the generated code. First, the optimizations are not generic templates: the model selects different strategies per workload (multi-pass safe softmax for long sequences, fused layer-norm with running statistics for the Pre-Norm decoder, masked tile loads with `tl.exp` numerical stabilization for cross-attention). Second, when a turn fails, the next-turn revision typically targets the exact error class in the feedback (e.g., adding numerical stabilization after a NaN report, switching from single-pass to multi-pass reduction after an out-of-bounds error). This pattern is consistent with the multi-turn agent training objective: the same policy must produce both the failure analysis and the corresponding code edit, so reflections are tied to actionable revisions rather than generic suggestions.

We also note what these examples do \emph{not} show. They are not microbenchmarks tuned for a single shape; each case is validated across multiple shape/stride/dtype combinations under the same test harness used during training, so the reported speedups reflect the realistic correctness-and-performance trade-off our pipeline filters for. A larger-scale latency distribution and memory-consumption study across the full corpus is left for future work and is explicitly listed in our limitations.

\end{document}